\definecolor{baselinecolor}{gray}{.9}
\newcommand{\baseline}[1]{\cellcolor{baselinecolor}\makebox[0pt][l]{#1}}
\newtheorem{prop}{Proposition}
\newcommand\identity{1\kern-0.25em\text{l}}
\begin{document}

\title{Spatio-Temporal Decoupled Learning for \\ Spiking Neural Networks}

\author{
	Chenxiang~Ma, 	
    Xinyi~Chen,
    Kay~Chen~Tan,~\IEEEmembership{Fellow,~IEEE}
	Jibin~Wu,~\IEEEmembership{Member,~IEEE}

\thanks{
C.~Ma, and X.~Chen are with the Department of Data Science and Artificial Intelligence, The Hong Kong Polytechnic University, Hong Kong SAR.}%

\thanks{
K.C.~Tan is with the Department of Data Science and Artificial Intelligence and the Research Center of Data Science and Artificial Intelligence, The Hong Kong Polytechnic University, Hong Kong, SAR.}%
\thanks{
J.~Wu is with the Department of Data Science and Artificial Intelligence, the Department of Computing, and the Research Center of Data Science and Artificial Intelligence, The Hong Kong Polytechnic University, Hong Kong, SAR. \textit{Corresponding author: Jibin Wu (e-mail: jibin.wu@polyu.edu.hk).} }%
}


\maketitle

\begin{abstract}
Spiking neural networks (SNNs) have gained significant attention for their potential to enable energy-efficient artificial intelligence. However, effective and efficient training of SNNs remains an unresolved challenge. While backpropagation through time (BPTT) achieves high accuracy, it incurs substantial memory overhead. In contrast, biologically plausible local learning methods are more memory-efficient but struggle to match the accuracy of BPTT. To bridge this gap, we propose spatio-temporal decouple learning (STDL), a novel training framework that decouples the spatial and temporal dependencies to achieve both high accuracy and training efficiency for SNNs. Specifically, to achieve spatial decoupling, STDL partitions the network into smaller subnetworks, each of which is trained independently using an auxiliary network. To address the decreased synergy among subnetworks resulting from spatial decoupling, STDL constructs each subnetwork’s auxiliary network by selecting the largest subset of layers from its subsequent network layers under a memory constraint. Furthermore, STDL decouples dependencies across time steps to enable efficient online learning. Extensive evaluations on seven static and event-based vision datasets demonstrate that STDL consistently outperforms local learning methods and achieves comparable accuracy to the BPTT method with considerably reduced GPU memory cost. Notably, STDL achieves 4$\times$ reduced GPU memory than BPTT on the ImageNet dataset. Therefore, this work opens up a promising avenue for memory-efficient SNN training. Code is available at \url{https://github.com/ChenxiangMA/STDL}.
\end{abstract}

\begin{IEEEkeywords}
Spiking neural networks, biologically inspired learning, local learning, neuromorphic computing.
\end{IEEEkeywords}

\section{Introduction}
\IEEEPARstart{H}{uman} brain is renowned for its remarkable efficiency and versatility in cognitive computing, which has long captivated researchers to unravel its underlying structure and operating mechanisms~\cite{dayan2001theoretical,berwick2013evolution}. These insights have, in turn, driven the development of brain-inspired artificial intelligence~(AI) systems aimed at replicating and harnessing the brain's remarkable information processing capabilities~\cite{richards2019deep,zador2023catalyzing}.
Among these efforts, spiking neural networks~(SNNs) have emerged as a promising approach~\cite{maass1997networks}. SNNs seek to faithfully model the dynamic behaviors of biological neural networks, leveraging the spike-based information processing principles observed in the brain~\cite{roy2019towards}. 
Unlike traditional artificial neural networks (ANNs) that use analog values~\cite{lecun2015deep}, SNNs employ binary spikes for information representation. Each spiking neuron has a membrane potential, an internal state that evolves and processes incoming spikes over time. This unique spike-based representation, coupled with rich spatiotemporal neuronal dynamics, enables SNNs to perform efficient event-driven computation, where neurons remain inactive until they receive incoming  spikes~\cite{roy2019towards}. 
Characterized by such event-driven computation as well as inherent sparsity, SNNs can offer remarkable energy efficiency when deployed on neuromorphic chips~\cite{spikingfullsubnet,davies2018loihi,pei2019towards}, thus serving as a compelling solution for energy-efficient AI~\cite{li2023brain,eshraghian2023training}.

However, the training of SNNs is non-trivial. Due to the dynamic nature of SNNs, the backpropagation through time~(BPTT)~\cite{werbos1990backpropagation,wu2018spatio} algorithm is commonly adopted for training SNNs. BPTT effectively computes parameter gradients by unfolding the SNN over time and backpropagating global error signals through both network layers and time steps. One major challenge in training SNNs lies in the non-differentiability of the neuronal activation function. This is often solved by using surrogate gradients~\cite{neftci2019surrogate,lian2023learnable}, i.e., replacing the gradient of the non-differentiable activation function with a continuous surrogate function that approximates the true gradient. Furthermore, for BPTT training, gradients cannot be computed until forward propagation across all layers and time steps is completed. 
Consequently, the intermediate neuronal states must be stored in memory until they are revisited for gradient computation, leading to high memory overhead.
This becomes a critical bottleneck when simulating large-scale SNN architectures~\cite{zenke2020brain}.

The biologically inspired local learning approach is an appealing alternative to circumvent the memory overhead issue in BPTT by breaking spatial and temporal dependencies~\cite{ma2023ell,ma2024scaling}. It enables the online training of each layer independently, and only requires storing the states for a single layer at each time step, leading to significantly reduced memory usage.
The unsupervised spike-timing-dependent plasticity~(STDP)~\cite{bi1998synaptic,diehl2015unsupervised,kheradpisheh2018stdp,hao2020biologically,2018zhengonline,2025hebbiansnn} is the most widely adopted local learning rule for SNNs, which modulates synaptic weights based on the relative timings of pre- and post-synaptic spikes. However, SNNs trained with STDP are limited to performing simple tasks~\cite{apolinario2025stllr}. Recently, studies have applied gradient descent to local learning by optimizing each layer with an auxiliary classifier and a supervised local loss function~\cite{ma2023ell,kaiser2020synaptic}.
These supervised local learning methods achieve significantly improved accuracy over STDP. While they suffer from poor scalability, they still represent an important step forward in achieving high accuracy with reduced memory requirements compared to BPTT (see Fig.~\ref{fig:fig1}). Therefore, achieving both high accuracy and memory efficiency in training SNNs remains a critical challenge that requires further research.
\begin{figure}[!t]
\centering\includegraphics[width=0.85\linewidth]{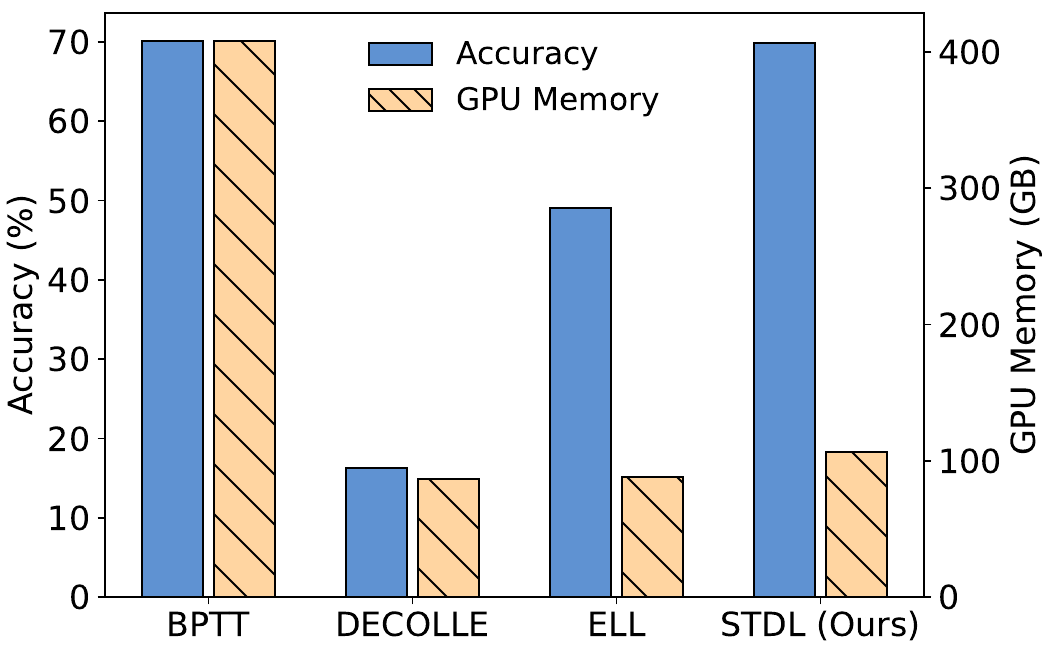}
\caption{Comparison of BPTT, supervised local learning rules (DECOLLE and ELL), and the proposed STDL method in terms of accuracy and GPU memory for training the SEWResNet-34 network on the ImageNet dataset.}
\label{fig:fig1}
\end{figure}

In this work, we begin by analyzing supervised local learning methods for training SNNs, and identify that the decoupling of spatial dependencies in these approaches often leads to a weak coupling issue. Specifically, the locally trained layers, optimized solely with their individual objectives, fail to synergize with subsequent layers in achieving the global objective. This represents a major bottleneck that limits the overall network performance. See Section~\ref{sec:mem_bptt} for more details.

To address this limitation, we propose a novel spatio-temporal decoupled learning (STDL) framework that enables both high-accuracy and memory-efficient training of SNNs. As illustrated in Fig.~\ref{Fig:BPTTvsSTDL}(c), STDL partitions the full network into several subnetworks, each of which is independently trained with an associated auxiliary network, without receiving gradients from subsequent layers. However, increasing the number of subnetworks exacerbates the weak coupling issue. To overcome this, we propose a greedy network partitioning algorithm that constructs a minimal number of subnetworks under a given memory constraint. This is reasonable since layers in typical SNN architectures~\cite{fang2021deep,simonyan2014very,he2016deep} have different memory requirements, allowing them to be grouped into a single subnetwork without affecting the peak memory usage. Moreover, we theoretically prove that the proposed greedy algorithm yields the optimal subnetwork configuration. 

To further enhance the coupling among subnetworks, we design each auxiliary network to guide its associated subnetwork towards producing representations that closely align with those learned via BPTT. In addition, our information-theoretic analysis suggests that enhancing the capacity of the auxiliary network improves the representation alignment. Motivated by these insights, we propose a principled method for constructing auxiliary networks. Specifically, for each subnetwork, we select a subset of its subsequent layers to form an auxiliary network that has the largest capacity while respecting the memory constraint. By combining this construction method with the greedy network partitioning algorithm, STDL enables subnetworks to coordinate effectively toward the global objective, thereby addressing the weak coupling issue. Furthermore, STDL decouples temporal dependencies in the training of each subnetwork by neglecting temporally dependent gradients that contribute negligibly to the overall parameter updates. This ensures that gradients are immediately available at each time step, further improving memory efficiency.

We conduct extensive experiments on image classification datasets, including CIFAR-10~\cite{krizhevsky2009learning}, CIFAR-100~\cite{krizhevsky2009learning}, SVHN~\cite{netzer2011reading}, and ImageNet~\cite{deng2009imagenet}, as well as event-based vision datasets, including DVS-CIFAR-10~\cite{li2017CIFAR10}, GAIT-DAY~\cite{gait_day_pami}, and HAR-DVS~\cite{2024_hardvs}. The results consistently demonstrate the superior performance of STDL across these varied task domains. 
For instance, on the ImageNet dataset using the SEWResNet34~\cite{fang2021deep} network, STDL outperforms the state-of-the-art supervised local learning methods, deep continuous local learning (DECOLLE)~\cite{kaiser2020synaptic} and efficient local learning~(ELL)~\cite{ma2023ell}, by over 50\% and 20\% in accuracy, respectively. This significant performance improvement highlights the effectiveness of STDL in enhancing the coordination between the independently trained subnetworks. Notably, STDL achieves comparable accuracy to the BPTT method while reducing the GPU memory usage by a factor of $4.0\times$. 

Our key contributions are summarized as follows:
\begin{itemize}
    \item STDL is the first method that effectively harnesses the memory advantage of local learning while achieving performance on par with BPTT, representing a significant step toward memory-efficient SNN training.
    \item We propose novel methods for partitioning the full network into subnetworks and constructing auxiliary networks. These novel designs effectively address the weak coupling issue, delivering enhanced performance while retaining the memory efficiency advantage.
    \item We conduct a comprehensive validation of STDL on seven benchmark datasets, consistently demonstrating remarkable accuracy that is significantly superior to existing local learning rules and on par with BPTT, as well as the substantially reduced GPU memory usage. Our experiments also demonstrate the strong generalization ability of STDL to different spiking neuron models.    
    \item We conduct extensive ablation studies to evaluate the memory efficiency of STDL and to assess the contribution of each individual component. In addition, our representation analysis shows that STDL learns representations closely aligned with those produced by BPTT, offering a compelling explanation for the efficacy of STDL.
\end{itemize}

The remainder of this work is organized as follows. 
In Section~\ref{sec:related_works}, we provide a review of existing SNN training methods.
In Section~\ref{sec:mem_bptt}, we analyze the memory demands of BPTT as well as the reasons behind the inferior accuracy of existing supervised local learning methods, and identify the weak coupling issue as the major bottleneck.
Section~\ref{sec:STDL} details the proposed STDL method, followed by extensive experimental results and analyses in Section~\ref{sec:exp_res}. Finally, we conclude the paper in Section~\ref{sec:conclu}.

\begin{figure}[!tb]
\centering\includegraphics[width=0.9\linewidth]{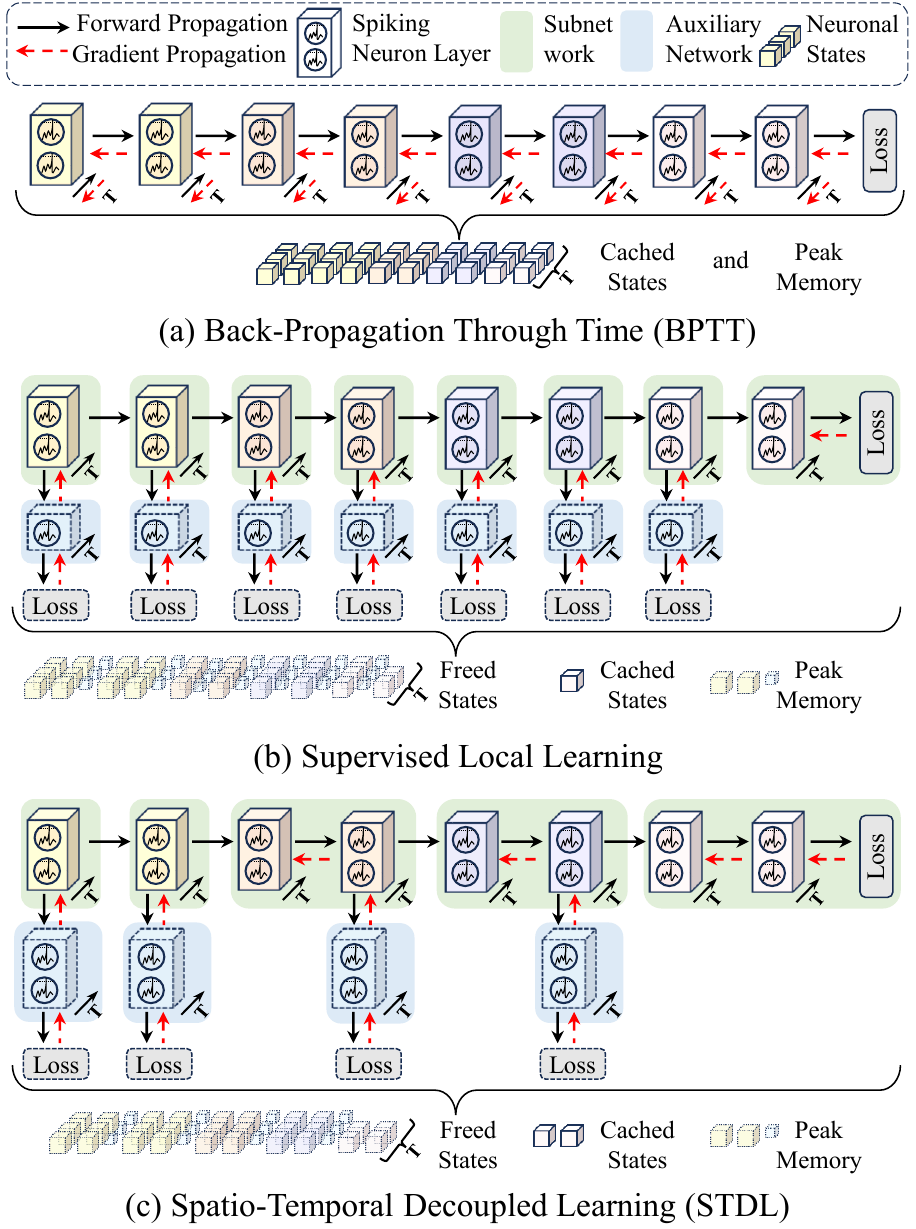}
\caption{\textbf{Comparison of BPTT, supervised local learning, and our STDL for training SNNs.}
(a) BPTT caches the entire trajectory of neuronal states during forward propagation for gradient computation, with memory overhead scaled along with both the number of layers and time steps. (b) Supervised local learning caches the states of only one layer at each time step, as it enables the online learning of each network layer along with an auxiliary linear classifier independently. (c) STDL partitions the original network into subnetworks by grouping as many adjacent layers as possible without affecting the peak memory. Each subnetwork is independently trained with an auxiliary network in an online learning manner. For both (b) and  (c), once the forward propagation of a subnetwork and its auxiliary network is completed, their gradients are immediately calculated, followed by freeing the cached states. }
\label{Fig:BPTTvsSTDL}
\end{figure}

\section{Related Works}
\label{sec:related_works}
In this section, we briefly review existing training methods for SNNs.

\subsection{BPTT Training}
Recently, BPTT training for SNNs has been advanced by developing adaptive surrogate gradients~\cite{li2021differentiable,wang2023adaptive}.
In addition, some efforts have been devoted to improving the stability of BPTT training.
For instance, novel loss functions~\cite{deng2021temporal,guo2022recdis} are developed to regulate the distribution of membrane potential, facilitating network optimization and convergence. 
Specialized variants of batch normalization techniques~\cite{zheng2021going,duan2022temporal} are developed for SNNs to accelerate convergence. Furthermore, trainable decays~\cite{fang2021incorporating}, adaptive thresholds~\cite{yin2021accurate}, gating mechanisms~\cite{yao2022glif}, attention mechanisms~\cite{yao2023attentionsnn,2024attentionsnn}, hybrid coding~\cite{2023chenhybridcoding}, augmented spikes~\cite{2022auglearn,2022AugMapping}, and multi-compartmental modeling~\cite{zhang2023tc,chen2024a} are explored to augment the capabilities of spiking neuron models, thereby improving BPTT's performance on practical tasks. 
However, BPTT incurs substantial GPU memory overhead.

\subsection{Local Learning}
Local learning has gained increasing attention due to its appealing characteristics of biological plausibility and memory efficiency~\cite{ma2024scaling,ororbia2023brain}. 
An early attempt~\cite{diehl2015unsupervised} to leverage the STDP rule to train single-layer spiking networks effectively enables unsupervised visual feature extraction. Subsequent works extend the approach to train spiking convolutional networks~\cite{kheradpisheh2018stdp} or utilize a symmetric variant of STDP in combination with other bio-plausible mechanisms to improve performance~\cite{hao2020biologically}. Despite these advancements, STDP-trained SNNs demonstrate effectiveness primarily on simple tasks, such as MNIST~\cite{kheradpisheh2018stdp,hao2020biologically}, and encounter significant challenges when dealing with more complex tasks. 
Regarding supervised local learning, DECOLLE~\cite{kaiser2020synaptic} employs fixed and random auxiliary classifiers to construct layer-wise loss functions. It also enables continuous training at each time step by ignoring all temporally dependent gradients. ELL~\cite{ma2023ell} further enhances accuracy by replacing the fixed auxiliary classifiers with trainable ones. 
However, they still have a large accuracy gap compared to BPTT, especially for large-scale SNNs. 

In addition, previous efforts have developed hybrid local-global learning methods. In local tandem learning~(LTL)~\cite{2022ltl}, local learning is employed to train SNNs assisted by a pre-trained ANN. Local plasticity can also be integrated with top-down supervision to facilitate multi-scale learning~\cite{wu2022brain}. A recent work proposes the spiking time sparse feedback (STSF) method~\cite{STSF} that associates sparse direct feedback alignment with STDP to achieve hybrid learning.

\subsection{Online Learning}
Online learning can be regarded as a simplified variant of local learning with decoupled temporal dependencies only, where parameter gradients are computed in real time.
Recent works leverage gradient approximation to achieve the online learning of SNNs. For instance, E-Prop~\cite{bellec2020solution} and OTTT~\cite{xiao2022online} approximate the gradients of BPTT by eligibility traces, which are the temporal accumulation of presynaptic activities. 
SLTT~\cite{meng2023towards} disregards all temporal dependencies in gradient computation, with no need for maintaining additional traces.
Another work~\cite{yin2023accurate} proposes a specific neuron model and a regularization term to improve online learning, which introduces additional computational overhead, limiting its application on large architectures. Despite these advancements,  online learning methods require storing states of all layers, leading to substantial memory usage.

\subsection{Other End-to-End Learning}
Other approaches for end-to-end training of SNNs include the ANN-to-SNN conversion approach~\cite{cao2015spiking,2022AugMapping,2025towardwang}, which constructs an SNN by converting weights from a pre-trained ANN with a similar structure. Additionally, tandem learning approaches~\cite{2023tandemwu,2022PTLwu,2022ltl} introduce weight sharing between an ANN and an SNN, where the ANN provides gradients to train the SNN. These methods fundamentally differ from ours as they rely on global objectives.

\section{Motivation}
\label{sec:mem_bptt}
In this section, we first analyze the substantial memory overhead associated with training an SNN using BPTT~\cite{wu2018spatio}. We then elucidate how local learning can alleviate this memory bottleneck. Finally, we examine the underlying reasons for the significant performance gap between existing local learning methods and BPTT.

\subsection{Memory Consumption of BPTT}
During the forward propagation, input samples traverse the network layer-by-layer until they reach the output layer, where predictions are generated.
Let us consider a layer $\ell$ which receives afferent spikes from the preceding layer.  
These input spikes are integrated into an internal state called the membrane potential. 
When the membrane potential exceeds a pre-defined threshold, a spike is generated, followed by a reset process. The neuronal dynamics of the commonly-used leaky integrate-and-fire (LIF) neuron model can be represented in the following discrete-time form~\cite{wu2018spatio}:
\begin{align}
\boldsymbol{m}^{\ell}[t] &= \lambda \boldsymbol{u}^{\ell}[t - 1]  + \boldsymbol{W}^{\ell} \boldsymbol{s}^{\ell-1}[t], \\
\boldsymbol{s}^{\ell}[t]  &= \Theta (\boldsymbol{m}^{\ell}[t] - V_{\mathrm{th}}), \\
    \boldsymbol{u}^{\ell}[t] &= \boldsymbol{m}^{\ell}[t] - V_{\mathrm{th}}\boldsymbol{s}^{\ell}[t], \\
        \Theta(x)&=\begin{cases}1, & x >= 0\\0, & \text{otherwise}\end{cases},
\end{align}
where $\boldsymbol{m}^{\ell}[t]$ and $\boldsymbol{u}^{\ell}[t]$ represent the membrane potentials before firing and after reset, respectively. $\lambda$ denotes a decay factor that controls the decay rate of the membrane potential. $\boldsymbol{s}^{\ell-1}[t]$ represents afferent spikes. $\boldsymbol{W}^{\ell}$ is a weight matrix connecting adjacent layers. 
The Heaviside function~$\Theta(x)$ determines whether a spike is generated based on the relationship between the membrane potential $\boldsymbol{m}^{\ell}[t]$ and the threshold $V_{\mathrm{th}}$.

Upon reaching the output layer, a loss $\mathcal{L}$ is calculated by measuring the discrepancy between the predictions and the targets. 
To update network parameters, gradients of the loss with respect to each parameter are computed through a backward pass. 
This pass starts from the last layer and the last time step and propagates in reverse across all network layers and time steps. 
The gradient of the loss with respect to the weights of layer~$\ell$ is formulated as~\cite{wu2018spatio}:
\begin{align}
\frac{\partial \mathcal{L}}{\partial \boldsymbol{W}^{\ell}}&= \sum_{t=1}^{T}\frac{\partial \mathcal{L}}{\partial \boldsymbol{m}^{\ell}[t]} \frac{\partial \boldsymbol{m}^{\ell}[t]}{\partial \boldsymbol{W}^{\ell}} = \sum_{t=1}^{T}{\boldsymbol{\delta}^{\ell}[t]} {\boldsymbol{s}^{\ell-1}[t]}^{\top}, 
\label{eq:weight}
\end{align}
\begin{equation}
\boldsymbol{\delta}^{\ell}[t]\!=\!\begin{cases} 
\frac{\partial \mathcal{L}}{\partial \boldsymbol{m}^L[T]},  &\ell\!=\!L~\text{and}~t\!=\!T\\ 
\boldsymbol{\delta}^{L}[t+1]\frac{\partial \boldsymbol{m}^{L}[t+1]}{\partial \boldsymbol{m}^{L}[t]}+\frac{\partial \mathcal{L}}{\partial \boldsymbol{m}^L[t]},  &\ell\!=\!L~\text{and}~t\!<\!T\\ 
\boldsymbol{\delta}^{\ell+1}[T]\frac{\partial \boldsymbol{m}^{\ell+1}[T]}{\partial \boldsymbol{s}^{\ell}[T]}\frac{\partial \boldsymbol{s}^{\ell}[T]}{\partial \boldsymbol{m}^{\ell}[T]},  &\ell\!<\!L~\text{and}~t\!=\!T\\ 
\boldsymbol{\delta}^{\ell}[t\!+\!1]\frac{\partial \boldsymbol{m}^{\ell}[t\!+\!1]}{\partial \boldsymbol{m}^{\ell}[t]}\!+\! \boldsymbol{\delta}^{\ell\!+\!1}[t]\!\frac{\partial \boldsymbol{m}^{\ell\!+\!1}[t]}{\partial \boldsymbol{m}^{\ell}[t]},\!\!\!&\!\text{otherwise}\! \end{cases}
\label{eq:grad_m}
\end{equation}
where $\boldsymbol{\delta}^{\ell}[t]\triangleq\frac{\partial \mathcal{L}}{\partial \boldsymbol{m}^{\ell}[t]}$, and $T$ and $L$ denotes the total number of time steps and layers, respectively.
The derivative of the Heaviside function, i.e., $\frac{\partial \boldsymbol{s}_{i}^{\ell}[t]}{\partial \boldsymbol{m}_{i}^{\ell}[t]}$, is zero for all values of~$\boldsymbol{m}_{i}^{\ell}[t]$ except for the case $\boldsymbol{m}_{i}^{\ell}[t]=V_{\mathrm{th}}$, where it becomes infinite. Surrogate functions~\cite{neftci2019surrogate,wu2018spatio} are employed to handle this non-differentiability, i.e., $\mathbb{H}\left(\boldsymbol{m}_i^{\ell}[t]\right) \approx \frac{\partial \boldsymbol{s}_i^{\ell}[t]}{\partial \boldsymbol{m}_i^{\ell}[t]}  $, such as the triangle function: 
\begin{equation}
\label{eq:triangle}
\mathbb{H}\left(\boldsymbol{m}_i^{\ell}[t]\right)=\frac{1}{\gamma^2}\max \left(0,~\gamma-|\boldsymbol{m}_{i}^{\ell}[t]-V_{\mathrm{th}}|\right),
\end{equation}
where $\gamma$ is a hyperparameter typically set to $1$.

According to Eq.~(\ref{eq:grad_m}), BPTT needs to access the gradients from subsequent layers and time steps to compute the gradients for the current layer at the current time step. 
Due to such dependencies across spatial and temporal dimensions, gradient computation in BPTT can only start after the full forward pass is completed. 
Consequently, all the intermediate neuronal states necessary to compute gradients are stored in memory during the forward pass. The resultant memory consumption is proportional to the total number of layers and time steps (see Fig.~\ref{Fig:BPTTvsSTDL} (a) for the illustration). 
Specifically, according to Eq.~(\ref{eq:weight}), afferent spikes $\boldsymbol{s}^{\ell-1}[t]$ are essential for calculating the weight gradients. Therefore, all afferent spikes across both spatial and temporal dimensions, i.e.,  
$\left\{\boldsymbol{s}^{\ell-1}[t]\!\mid\!\ell\!\in\!\{1,\ldots, L\}, t\!\in\!\{1,\ldots,T\}\right\}$, must be stored in memory during the forward pass. Similarly, all membrane potentials, i.e.,  
$\left\{\boldsymbol{m}^{\ell}[t]\!\mid\!\ell\!\in\!\{1,\ldots,L\}, t\!\in\!\{1,\ldots,T\}\right\}$, need to be retained in memory to compute the surrogate gradient~$\mathbb{H}\left(\boldsymbol{m}_i^{\ell}[t]\right)$ according to Eq.~(\ref{eq:triangle}).  
The memory consumption becomes substantial when training large-scale spiking models with a long time duration.

\subsection{Memory-Efficient Training via Decoupling Spatial and Temporal Dependencies}
As aforementioned, the substantial memory requirement of BPTT stems from the spatial and temporal dependencies.
Hence, a natural solution for memory-efficient training is to decouple these dependencies, allowing gradient computations to occur before the entire forward pass is completed.
In this way, cached states can be released from memory, leading to a reduction in the memory requirement. 
This insight aligns with the principles of synaptic plasticity observed by neuroscientists~\cite{hebb1949organization,bi1998synaptic}, where synaptic connections are adjusted locally without waiting for signals from distant and unconnected layers. 
Such local learning rules have been successfully applied in training SNNs. Among them, supervised local learning methods~\cite{kaiser2020synaptic,ma2023ell} have demonstrated superior performance on practical tasks.  As illustrated in Fig.~\ref{Fig:BPTTvsSTDL} (b), these methods enable the online learning of each layer independently using a layer-wise auxiliary classifier, requiring only the states necessary for updating a single layer to be retained in memory at the current time step. 
However, despite their high memory efficiency, they still face a significant accuracy gap compared to BPTT, especially in large-scale SNNs~\cite{ma2023ell}.

\subsection{The Weak Coupling Issue in Local Learning}
\label{subsec:understanding}
Recent studies on online learning have demonstrated that decoupling temporal dependencies, i.e., training each time step independently, has a negligible impact on model accuracy~\cite{meng2023towards,xiao2022online}.
This is because the gradients associated with temporal dependencies contribute minimally to the overall parameter gradients~\cite{meng2023towards}. Motivated by this insight, we hypothesize that the primary cause of the performance gap between supervised local learning and BPTT lies in the decoupling of spatial dependencies. We are thus motivated to address the accuracy loss resulting from the spatial decoupling.

\begin{table}[t]
\centering
\caption{Performance of supervised local learning for training ResNet-18 on CIFAR-10 ($T=1$), which is evenly divided into $K$ independently trained subnetworks. ``$K=1$'' is BPTT training.}
\linespread{1.02}
\setlength{\tabcolsep}{2.5mm}
\begin{tabular}{cccccc}
\hline
          & $K$=1   & $K$=3   & $K$=5   & $K$=7   & $K$=9   \\ \hline
Acc. (\%) & 94.01 & 91.79 & 90.71 & 85.88 & 85.53 \\
Mem. (GB) & 4.56  & 3.36  & 2.63  & 2.29  & 2.29  \\ \hline
\end{tabular}%
\label{tab:varying_k}
\end{table}
\linespread{1.0}
\begin{figure}[t]
\vspace{-2mm}
\centering\includegraphics[width=0.85\linewidth]{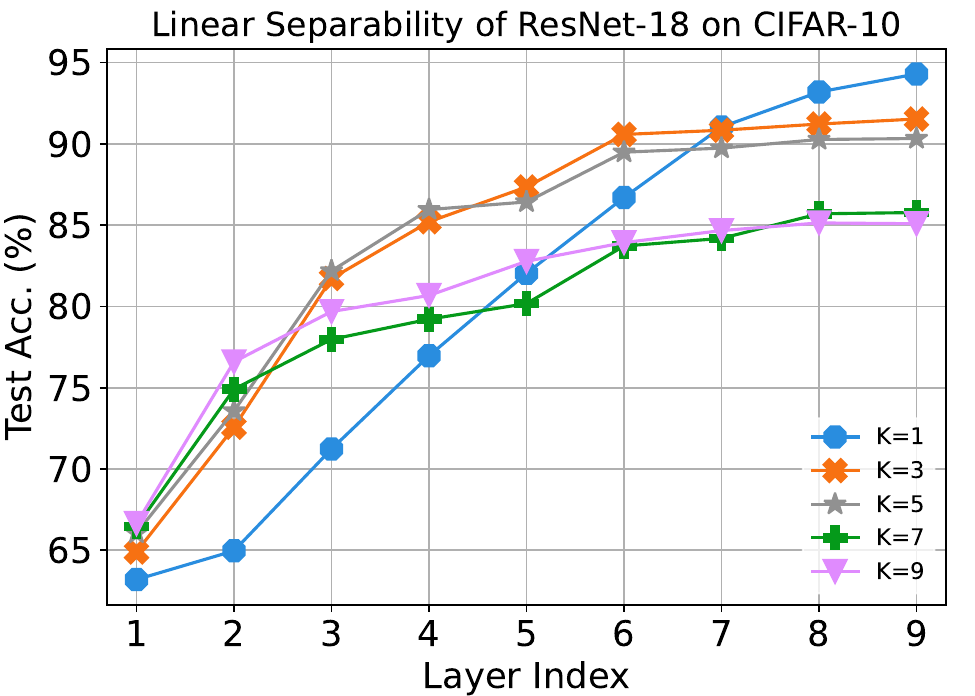}
\vspace{-1.5mm}
\caption{Comparison of layer-wise linear separability among supervised local learning variants that evenly divide ResNet-18 into $K$ subnetworks. } 
\label{fig:varying_k}
\end{figure}

A straightforward solution to mitigate this issue is to reduce the degree of independence across the spatial dimension by decreasing the number of locally trained layers.
However, a naive network partitioning strategy, where the network is evenly divided into several subnetworks, each comprising an equal number of consecutive layers, fails to match the accuracy of BPTT and offers only marginal memory savings.
This insight is supported by empirical results. Specifically, we evaluate supervised local learning on CIFAR-10 using ResNet-18, which is evenly partitioned into $K$ subnetworks, each trained independently with a trainable linear classifier~\cite{ma2023ell}.
As shown in Table~\ref{tab:varying_k}, while reducing the number of subnetworks narrows the accuracy gap to BPTT, the gap remains substantial. Moreover, this improvement comes at the cost of significantly reduced memory efficiency. These findings highlight that achieving a comparable accuracy to BPTT while retaining the memory efficiency of local learning is non-trivial.

To investigate the underlying reasons for the performance degradation, we analyze the linear separability of each layer's representation learned using these local learning variants and compare it with that obtained via BPTT. As shown in Fig.~\ref{fig:varying_k}, local learning variants exhibit distinct patterns of linear separability. Specifically, they tend to learn more linearly separable representations in early layers, but show reduced separability in deeper layers, compared to BPTT. This discrepancy becomes increasingly pronounced as the number of independently trained subnetworks increases. 
This observation indicates that early layers, optimized solely for their local objectives, fail to produce representations that are well aligned with the needs of subsequent layers and benefit the global objective. 
We refer to this as \textit{the weak coupling issue}, where independently trained subnetworks fail to collaborate effectively toward the global objective, which is the major reason for the suboptimal performance of local learning methods.

\section{Spatio-Temporal Decoupled Learning (STDL)}
\label{sec:STDL}
In this section, we propose STDL, a novel memory-efficient training method for SNNs. We first present an overview of STDL, and then describe in detail how it mitigates the weak coupling issue through a greedy network partitioning method and an information-theoretically guided auxiliary network construction approach. Lastly, we explain how STDL incorporates temporal decoupling to further reduce memory overhead without compromising performance.

\subsection{Overview of STDL}
\label{sec:stdl}
As illustrated in Fig.~\ref{Fig:BPTTvsSTDL}(c), STDL enables memory-efficient training of SNNs by decoupling both spatial and temporal dependencies. To decouple spatial dependencies, STDL partitions the full network into several smaller subnetworks, each of which is trained independently with the supervision of its associated auxiliary network. To address the weak coupling issue, STDL introduces a provably optimal greedy network partitioning algorithm that minimizes the number of subnetworks under a given memory constraint. Additionally, STDL incorporates an information-theoretically grounded auxiliary network construction method that guides each subnetwork to produce representations that closely resemble those learned through BPTT, thereby enhancing the coupling across subnetworks.
STDL further decouples temporal dependencies by discarding uninformative temporally dependent gradients, allowing parameter gradients to be computed at each time step. 

Formally, consider an $L$-layer SNN denoted as $\mathcal{F}: \mathcal{X} \rightarrow \mathcal{Y}$.
A mini-batch of input samples $\boldsymbol{x}\in \mathcal{X}$ is processed sequentially through the network layers as: 
\begin{equation}
\mathcal{F}(\boldsymbol{x}) = f^{L} \circ \dots \circ f^{\ell} \circ \dots \circ f^1(\boldsymbol{x}),
\end{equation} where $f^{\ell}(\cdot)$ signifies the operation of the $\ell$-th layer, and $\circ$ is the function composition operator. By the greedy network partitioning algorithm detailed in Section~\ref{sec:greedy_net_partition},
STDL partitions the full network $\mathcal{F}$ into~$K$ subnetworks~$(K \leq L)$ as: 
\begin{equation}
\mathcal{F}(\boldsymbol{x}) = \mathcal{F}^{K} \circ \dots \circ \mathcal{F}^{k} \circ  \dots \circ \mathcal{F}^1(\boldsymbol{x}).
\end{equation}
The propagation of gradients between adjacent subnetworks is prohibited by using the stop-gradient operator $\mathrm{sg}(\cdot)$ to decouple their spatial dependency during training:
\begin{equation}
\boldsymbol{s}^{k}=\mathcal{F}^{k}\left(\mathrm{sg}(\boldsymbol{s}^{k-1})\right).
\end{equation}
Each subnetwork $\mathcal{F}^{k}$ is paired with an auxiliary network $\mathcal{G}^{k}: \mathcal{H} \rightarrow \mathcal{Y}$, which maps its spike-based representation $\boldsymbol{s}^{k} \in \mathcal{H}$ to a prediction. The construction of $\mathcal{G}^{k}$ is described in Section~\ref{sec:aux_net_construct}. The pair $(\mathcal{F}^{k}, \mathcal{G}^{k})$ is jointly optimized using a local loss function $\mathcal{L}^{k}(\mathcal{G}^{k}(\boldsymbol{s}^{k}), \boldsymbol{y})$ that compares the local prediction with the target $\boldsymbol{y} \in \mathcal{Y}$. This loss is computed as the sum of instantaneous losses over the time window:
\begin{equation}
\mathcal{L}^k\left(\mathcal{G}^{k}(\boldsymbol{s}^{k}), \boldsymbol{y}\right)=\sum_{t=1}^T\mathcal{L}^k[t]\left(\mathcal{G}^{k}(\boldsymbol{s}^{k}[t]), \boldsymbol{y}[t]\right).    
\end{equation}
To decouple temporal dependencies, STDL enables the online learning of each subnetwork and its auxiliary network, as detailed in Section~\ref{sec:tem_dec}. Note that the last subnetwork~$\mathcal{F}^{K}$ is not paired with an auxiliary network, as it already includes the output classifier. All auxiliary networks are used only during training and discarded during inference.

\subsection{Greedy Network Partitioning for a Minimal Number of Subnetworks}
\label{sec:greedy_net_partition}

As shown in Section~\ref{subsec:understanding}, increasing the number of subnetworks exacerbates the weak coupling issue and leads to noticeable accuracy degradation. Additionally, due to different memory requirements across layers in SNN architectures~\cite{simonyan2014very,he2016deep,fang2021deep}, it is often possible to group several successive layers into a single subnetwork without affecting the peak memory footprint. These insights motivate us to minimize the number of subnetworks under a given memory budget.

To formalize this objective, let $\boldsymbol{P}\!=\!\{p_1, p_2, \ldots, p_K\}$ denote the set of subnetwork boundary indices, where each $p_k$ corresponds to the index of the last layer in the $k$-th subnetwork, and $K$ is the total number of subnetworks. Let $\mathrm{mem}_\ell$ represent the memory footprint of the $\ell$-th layer, and let $M_{\mathrm{aux}}$ be the maximum allowable memory footprint per subnetwork. The objective is to partition the $L$-layer network into the smallest number of subnetworks such that the memory footprint of each subnetwork does not exceed $M_{\mathrm{aux}}$. This problem can be formally defined as:
\begin{align}
\min_{\boldsymbol{P}}& \quad K \nonumber\\
\textrm{subject to} \sum_{\ell=p_{k-1}+1}^{p_k}\mathrm{mem}_{\ell} & \leq M_{\mathrm{aux}}, \quad k = 1, \ldots, K, \nonumber\\ 
1\leq p_1<p_2&<\dots<p_K=L.
\label{eq:net_partition_objective}
\end{align}
Solving this problem via brute force is computationally intractable, as it requires evaluating all possible partitions of the $L$ layers, resulting in exponential complexity. To address this, we propose a greedy network partitioning algorithm that efficiently constructs the optimal solution in a single pass. 

Specifically, the greedy algorithm initializes at the layer following the previous subnetwork, and incrementally adds layers into the current subnetwork until the total memory footprint of the included layers surpasses the given memory constraint. Any remaining layers are then assigned to the next subnetwork with the same iterative process. This partitioning process continues until all layers are allocated into subnetworks. The pseudocode for the greedy algorithm is provided in Algorithm~\ref{alg:greedy_partition}. The following proposition establishes the optimality of this greedy algorithm.
\begin{prop}
The greedy network partitioning algorithm described in Algorithm~\ref{alg:greedy_partition} yields an optimal solution to the optimization problem defined in Eq.~\ref{eq:net_partition_objective}.
\label{prop:greedy_net_partition}
\end{prop}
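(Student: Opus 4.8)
The plan is to prove optimality of the greedy algorithm by a standard exchange argument, showing that the greedy solution uses no more subnetworks than any feasible solution, combined with a feasibility check that greedy never gets ``stuck.''

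\textbf{Step 1: Feasibility.} First I would verify that the greedy algorithm produces a feasible partition whenever one exists. The only way greedy could fail is if some single layer $\ell$ has $\mathrm{mem}_\ell > M_{\mathrm{aux}}$, since greedy starts each new subnetwork empty and adds at least one layer. But in that case \emph{no} feasible partition exists, because that layer must belong to some subnetwork and would already violate the constraint. So assuming the problem is feasible, greedy always terminates with every subnetwork satisfying the memory bound, and every layer assigned.

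\textbf{Step 2: The greedy-stays-ahead lemma.} Let $q_1 < q_2 < \dots < q_{K_g} = L$ be the boundary indices produced by greedy, and let $p_1 < p_2 < \dots < p_{K^*} = L$ be the boundaries of any feasible solution. I would prove by induction on $k$ that $q_k \geq p_k$ for all $k \leq \min(K_g, K^*)$; that is, greedy's $k$-th subnetwork reaches at least as far as any feasible solution's $k$-th subnetwork. For the base case, greedy's first subnetwork $\{1,\dots,q_1\}$ extends as far as possible subject to $\sum_{\ell=1}^{q_1}\mathrm{mem}_\ell \leq M_{\mathrm{aux}}$; since the feasible solution's first subnetwork $\{1,\dots,p_1\}$ also satisfies this bound and greedy takes the maximal such prefix, $q_1 \geq p_1$. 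For the inductive step, assume $q_{k-1} \geq p_{k-1}$. Greedy's $k$-th subnetwork starts at $q_{k-1}+1$ and extends maximally; the feasible solution's $k$-th subnetwork is $\{p_{k-1}+1,\dots,p_k\}$. Since $q_{k-1} \geq p_{k-1}$, the layers $\{q_{k-1}+1,\dots,p_k\}$ form a (contiguous) subset of $\{p_{k-1}+1,\dots,p_k\}$, hence have memory sum at most $M_{\mathrm{aux}}$ (using nonnegativity of $\mathrm{mem}_\ell$); therefore greedy, which takes the longest feasible run starting at $q_{k-1}+1$, satisfies $q_k \geq p_k$.

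\textbf{Step 3: Conclude $K_g \leq K^*$.} Suppose for contradiction that $K_g > K^*$. Apply the lemma with $k = K^*$: we get $q_{K^*} \geq p_{K^*} = L$. Since boundary indices are at most $L$, this forces $q_{K^*} = L$, meaning greedy has already covered all layers after $K^*$ subnetworks, so it terminates with $K_g = K^*$ subnetworks — contradicting $K_g > K^*$. Hence $K_g \leq K^*$ for every feasible $\boldsymbol{P}$, so greedy attains the minimum, establishing the proposition. I would also note briefly that greedy runs in a single $O(L)$ pass.

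\textbf{Main obstacle.} The argument is essentially the classic interval/activity-selection style proof, so there is no deep difficulty; the one place to be careful is the inductive step, where I must correctly argue that a contiguous \emph{suffix} of one subnetwork's layer set (namely $\{q_{k-1}+1,\dots,p_k\}$, obtained by dropping the prefix $\{p_{k-1}+1,\dots,q_{k-1}\}$ already absorbed by greedy's earlier subnetworks) has memory sum bounded by that of the full set. This relies on $\mathrm{mem}_\ell \geq 0$, which holds trivially, and on the contiguity of the layer indexing so that ``extending maximally from a fixed start'' is well-defined. A secondary subtlety is making the edge cases explicit: when $q_{k-1} \geq p_k$ the inductive step is immediate, and I should state that greedy's maximal-run rule is exactly what Algorithm~\ref{alg:greedy_partition} implements.
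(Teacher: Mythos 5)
Your proof is correct, but it follows the ``greedy stays ahead'' template, whereas the paper's own proof is an exchange argument: it assumes an optimal partition $\boldsymbol{O}$, takes the largest index $r$ at which $\boldsymbol{O}$ agrees with the greedy partition $\boldsymbol{P}$, notes that $p_{r+1} > o_{r+1}$ because both $(r{+}1)$-th blocks start at the same layer and greedy extends maximally, and then replaces $o_{r+1}$ by $p_{r+1}$ --- feasible because a suffix of a feasible block is feasible --- iterating until $\boldsymbol{O}$ is transformed into $\boldsymbol{P}$ without increasing the number of subnetworks. The mathematical core is the same in both: your inductive step (the sum over $\{q_{k-1}{+}1,\dots,p_k\}$ is bounded by the sum over $\{p_{k-1}{+}1,\dots,p_k\}$ by nonnegativity of $\mathrm{mem}_\ell$) is exactly the suffix-feasibility fact the paper invokes, so the difference is in packaging rather than substance. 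Your dominance induction $q_k \ge p_k$ followed by the counting step $K_g \le K^*$ is arguably the cleaner write-up: it avoids the bookkeeping the paper leaves implicit in the exchange (after setting $o_{r+1} := p_{r+1}$, any later boundaries of $\boldsymbol{O}$ that fall at or below the new boundary must be dropped, which only decreases the block count, and the next block of $\boldsymbol{O}$ shrinks to a suffix and hence remains feasible), and you handle the edge case $q_{k-1} \ge p_k$ explicitly. You are also more careful about feasibility --- greedy can only ``fail'' when a single layer exceeds $M_{\mathrm{aux}}$, in which case no partition is feasible at all --- where the paper simply declares feasibility evident. What the paper's route buys is a short, constructive picture of morphing any optimal solution into the greedy one; what yours buys is a fully explicit induction with the contradiction step $K_g > K^*$ ruled out by $q_{K^*} = L$.
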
 
\begin{proof}
The feasibility of the solution generated by the greedy algorithm is evident, as it consistently ensures that the memory footprint of each subnetwork does not exceed the constraint and that the subnetwork boundaries also meet the constraint.

Next, we prove the optimality of the solution. Assume that $\boldsymbol{P}\!=\!\{p_1, p_2, \ldots, p_K\}$ produced by the greedy algorithm is not optimal, and let $\boldsymbol{O}\!=\!\{o_1,o_2, \ldots, o_M\}$ be an optimal solution. Let $r$ be the largest index where the partitions $\boldsymbol{P}$ and $\boldsymbol{O}$ coincide, such that $p_1\!=\!o_1, \ldots, p_r\!=\!o_r$. The difference arises at $p_{r+1}$ and $o_{r+1}$, where $p_{r+1}\!>\!o_{r+1}$, because the greedy algorithm includes more layers. Since the greedy algorithm's $r\!+\!1$-th subnetwork, which includes layers up to $p_{r+1}$, does not violate the memory constraint, the corresponding range in $\boldsymbol{O}$ must adhere to the same constraint as they start with the same layer. As a result, we can adjust $o_{r+1}$ to equal $p_{r+1}$ without increasing the number of subnetworks or exceeding the memory constraint. Applying this adjustment iteratively to transform $\boldsymbol{O}$ into $\boldsymbol{P}$ does not degrade the quality of the solution. Therefore, the solution yielded by the greedy algorithm is optimal.
\end{proof}

\begin{algorithm}[!t]
\caption{Greedy Network Partitioning}
\label{alg:greedy_partition}
\KwIn{Memory footprint of each layer~$\{\mathrm{mem}_\ell\}_{\ell=1}^L$, total number of layers $L$, memory constraint per subnetwork $M_{\mathrm{sub}}$}
\KwOut{\!Subnetwork boundaries $\!\boldsymbol{P}\!=\!\{p_1, p_2, \!\ldots,\!p_K\}$}

$\boldsymbol{P} \gets \emptyset$\;
$current\_mem \gets 0$\;

\For{$\ell = 1$ \KwTo $L$}{
    $current\_mem \gets current\_mem + \mathrm{mem}_\ell$\;
    \If{$current\_mem > M_{\mathrm{sub}}$}{
        $p \gets \ell - 1$\;
        $\boldsymbol{P} \gets \boldsymbol{P} \cup \{p\}$\;
        $current\_mem \gets \mathrm{mem}_\ell$\;
    }
}
$\boldsymbol{P} \gets \boldsymbol{P} \cup \{L\}$\;
\Return{$\boldsymbol{P}$}
\end{algorithm}

\subsection{Auxiliary Network Construction for Representation Alignment} 
\label{sec:aux_net_construct}
The design of auxiliary networks is crucial for partitioned subnetworks to contribute collaboratively to the global objective. 
To promote the coupling among subnetworks, we aim to construct auxiliary networks that enable the representations of their subnetworks to align with those produced by BPTT. This is reasonable as the subnetworks trained using BPTT are naturally coupled. To achieve this, we introduce a structure prior into the auxiliary networks.
Specifically, for a subnetwork, we select a subset of its subsequent layers as its auxiliary network.
In this way, the resultant auxiliary network can incorporate critical transformations in the subsequent layers, thereby enabling the subnetwork's representation to be similar to that of BPTT. However, under a given memory constraint, it remains non-trivial to determine which subsequent layers should be selected to construct the auxiliary network. To address this challenge, we analyze the discrepancy between representations produced by the auxiliary network and BPTT from an information-theoretic perspective~\cite{shwartz2017opening}. Building on this analysis, we propose to select the subsequent layers that maximize the auxiliary network's capacity, i.e., depth and width, within the memory budget, as explained below.

Without loss of generality, we consider the $k$-th subnetwork. Let $\boldsymbol{s}_{\mathrm{local}}^k$ denote the spike-based representation learned by its auxiliary network $\mathcal{G}^k$, and let $\hat{\boldsymbol{y}}_{\mathrm{local}}^k$ denote the local prediction generated by $\mathcal{G}^k$. Similarly, we define $\boldsymbol{s}_{\mathrm{global}}^k$ as the representation learned via BPTT and $\hat{\boldsymbol{y}}_{\mathrm{global}}$ as the global prediction. Then, we quantify the difference between $\boldsymbol{s}_{\mathrm{local}}^k$ and $\boldsymbol{s}_{\mathrm{global}}^k$ from an information-theoretic perspective. Specifically, we regard $\boldsymbol{s}_{\mathrm{local}}^k$, $\boldsymbol{s}_{\mathrm{global}}^k$, $\hat{\boldsymbol{y}}_{\mathrm{local}}^k$, $\hat{\boldsymbol{y}}_{\mathrm{global}}$, along with the input~$\boldsymbol{x}$ and the label $\boldsymbol{y}$, as random variables. The information encoded by $\boldsymbol{s}_{\mathrm{local}}^k$ and $\boldsymbol{s}_{\mathrm{global}}^k$ is then characterized using two mutual information quantities~\cite{shwartz2017opening}: $I(\boldsymbol{s}^k; \boldsymbol{x})$, quantifying the information related to the input $\boldsymbol{x}$, and $I(\boldsymbol{s}^k; \boldsymbol{y})$, quantifying the information related to the label $\boldsymbol{y}$. In the following analysis, we focus exclusively on the discrepancy between $I(\boldsymbol{s}_{\mathrm{local}}^k; \boldsymbol{y})$ and $I(\boldsymbol{s}_{\mathrm{global}}^k; \boldsymbol{y})$. The discrepancy between $I(\boldsymbol{s}_{\mathrm{local}}^k; \boldsymbol{x})$ and $I(\boldsymbol{s}_{\mathrm{global}}^k; \boldsymbol{x})$ can be derived using an analogous procedure.

\begin{figure}[!tb]
\centering\includegraphics[width=0.85\linewidth]{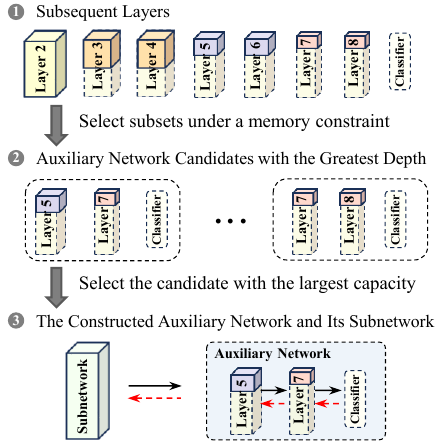}
\caption{\textbf{Illustration of auxiliary network construction in STDL.} For each subnetwork, the auxiliary network is constructed by selecting a subset of its subsequent layers. The subnet has the largest expressive capacity, i.e., depth and width, while not exceeding a given memory constraint. The shaded area within each layer represents its memory footprint during training.}
\label{fig:sptial_dec}
\end{figure}

Our objective is to find an auxiliary network $\mathcal{G}^k$ that can guide its subnetwork to generate $\boldsymbol{s}_{\mathrm{local}}^k$ closely resembling~$\boldsymbol{s}_{\mathrm{global}}^k$ under the memory constraint $M_{\mathrm{sub}}$. This can be formulated by minimizing the difference between the two mutual information quantities:
\begin{align} 
\min_{\mathcal{G}^k} \quad | I(\boldsymbol{s}_{\mathrm{global}}^k; \boldsymbol{y}) &- I(\boldsymbol{s}_{\mathrm{local}}^k; \boldsymbol{y}) | \nonumber \\
\textrm{subject to} \quad M_{\mathcal{G}^k} & \leq M_{\mathrm{sub}},
\label{eq:min_mi}
\end{align}
where $M_{\mathcal{G}^k}$ denotes the memory footprint of the auxiliary network $\mathcal{G}^k$. Note that $I(\boldsymbol{s}_{\mathrm{local}}^k; \boldsymbol{y}) \leq I(\boldsymbol{s}_{\mathrm{global}}^k; \boldsymbol{y})$, where the equality holds if and only if the auxiliary network is exactly equivalent to the full subsequent layers. However, due to the memory constraint, auxiliary networks typically possess lower capacity, and thus inherently retain less information about~$\boldsymbol{y}$. Since the subsequent layers of the $k$-th subnetwork are fixed, $I(\boldsymbol{s}_{\mathrm{global}}^k; \boldsymbol{y})$ is constant. Consequently, minimizing Eq.~\ref{eq:min_mi} is equivalent to maximizing $I(\boldsymbol{s}_{\mathrm{local}}^k; \boldsymbol{y})$. Instead of directly analyzing the maximization, we seek to maximize a surrogate objective, specifically a lower bound on the mutual information~$I(\boldsymbol{s}_{\mathrm{local}}^k; \boldsymbol{y})$. We achieve this by leveraging the data processing inequality (DPI)~\cite{shwartz2017opening}. Specifically, with the Markov chain $\boldsymbol{y} \rightarrow \boldsymbol{x} \rightarrow \boldsymbol{s}^k \rightarrow \hat{\boldsymbol{y}}$, the DPI guarantees:
\begin{equation} I(\boldsymbol{s}_{\mathrm{local}}^k; \boldsymbol{y}) \geq I(\hat{\boldsymbol{y}}_{\mathrm{local}}^k; \boldsymbol{y}). \end{equation}
This suggests that the prediction $\hat{\boldsymbol{y}}$, being a function of the representation $\boldsymbol{s}^k$, cannot contain more information about $\boldsymbol{y}$ than the representation itself. As a result, minimizing the mutual information gap in Eq.~\ref{eq:min_mi} is equivalent to maximizing the lower bound~$I(\hat{\boldsymbol{y}}_{\mathrm{local}}^k; \boldsymbol{y})$. The following proposition establishes how this quantity can be increased by expanding the expressive capacity of the auxiliary network.
\begin{prop}
\label{prop:mutual_monotone}
Let $\mathcal{G}_{\boldsymbol{\theta}}^k$ be the $k$-th auxiliary network, parameterized by $\boldsymbol{\theta} \in \mathbb{R}^{D \times W}$ with depth $D$ and width $W$, and assume it is sufficiently trained without significant overfitting or underfitting. Let $\hat{\boldsymbol{y}}_{\mathrm{local}}^k = \mathcal{G}_{\boldsymbol{\theta}}^k(\boldsymbol{s}^k)$ denote its prediction given input $\boldsymbol{s}^k$. Consider an expanded auxiliary network $\mathcal{G}_{\boldsymbol{\theta}'}^k$ with parameters $\boldsymbol{\theta}' \in \mathbb{R}^{D' \times W'}$, satisfying $D' \geq D$ and $W' \geq W$. Assume $\mathcal{G}_{\boldsymbol{\theta}'}^k$ is also sufficiently trained, and let $\hat{\boldsymbol{y}}_{\mathrm{local}}^{k'} = \mathcal{G}_{\boldsymbol{\theta}'}^k(\boldsymbol{s}^k)$ denote its prediction. Then, the mutual information between the predictions and the label satisfies:
\begin{equation}
I\left(\hat{\boldsymbol{y}}_{\mathrm{local}}^k; \boldsymbol{y} \right) \leq I\left(\hat{\boldsymbol{y}}_{\mathrm{local}}^{k'}; \boldsymbol{y} \right).
\end{equation}
\end{prop}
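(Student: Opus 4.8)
The plan is to reduce the claim to two ingredients. First, a \emph{hypothesis-class containment}: since $D' \geq D$ and $W' \geq W$, every input--output map realizable by $\mathcal{G}_{\boldsymbol{\theta}}^k$ is also realizable by $\mathcal{G}_{\boldsymbol{\theta}'}^k$, so after ``sufficient training'' the enlarged network attains a population cross-entropy risk no larger than the smaller one. Second, a \emph{loss-to-information bridge}: under the stated training assumptions the label information carried by the prediction, $I(\hat{\boldsymbol{y}}_{\mathrm{local}}^k; \boldsymbol{y})$, is an affine decreasing function of that population risk, namely $I(\hat{\boldsymbol{y}}_{\mathrm{local}}^k; \boldsymbol{y}) = H(\boldsymbol{y}) - R(\mathcal{G}_{\boldsymbol{\theta}}^k)$. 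Chaining the two then gives $I(\hat{\boldsymbol{y}}_{\mathrm{local}}^k; \boldsymbol{y}) \leq I(\hat{\boldsymbol{y}}_{\mathrm{local}}^{k'}; \boldsymbol{y})$. Throughout, both auxiliary networks act on the same (frozen) subnetwork representation $\boldsymbol{s}^k$, which is what makes the comparison of their population risks meaningful.

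For the first ingredient I would embed $\boldsymbol{\theta}$ into $\boldsymbol{\theta}'$ so that $\mathcal{G}_{\boldsymbol{\theta}'}^k$ computes exactly the same function of $\boldsymbol{s}^k$ as $\mathcal{G}_{\boldsymbol{\theta}}^k$: the extra $W' - W$ channels in each layer are made inert by zeroing their incoming and outgoing weights, and each of the extra $D' - D$ layers is configured to transmit its input essentially unchanged (for LIF head layers, via a high-gain, low-threshold setting with a compensating readout, or simply by noting that the final linear classifier can absorb any fixed reparameterization). Hence the function class of $\mathcal{G}_{\boldsymbol{\theta}}^k$ is contained in that of $\mathcal{G}_{\boldsymbol{\theta}'}^k$, so the class-minimal population risk of the enlarged network is no larger. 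The hypothesis ``sufficiently trained without significant overfitting or underfitting'' is precisely what lets me equate the risk actually achieved by each trained network with its class minimum, yielding $R(\mathcal{G}_{\boldsymbol{\theta}'}^k) \leq R(\mathcal{G}_{\boldsymbol{\theta}}^k)$.

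For the second ingredient I would write $I(\hat{\boldsymbol{y}}_{\mathrm{local}}^k; \boldsymbol{y}) = H(\boldsymbol{y}) - H(\boldsymbol{y} \mid \hat{\boldsymbol{y}}_{\mathrm{local}}^k)$, with $\hat{\boldsymbol{y}}_{\mathrm{local}}^k$ the soft output distribution of the classifier head. For any predictor, $H(\boldsymbol{y} \mid \hat{\boldsymbol{y}}_{\mathrm{local}}^k) \leq \mathbb{E}[-\log \hat{\boldsymbol{y}}_{\mathrm{local}}^k[\boldsymbol{y}]] = R(\mathcal{G}_{\boldsymbol{\theta}}^k)$, because decoding $\boldsymbol{y}$ from $\hat{\boldsymbol{y}}_{\mathrm{local}}^k$ by the self-consistent rule that assigns probability $\hat{\boldsymbol{y}}_{\mathrm{local}}^k[j]$ to class $j$ is one admissible choice; and, reading ``without significant overfitting or underfitting'' as the statement that a well-trained head is (approximately) calibrated, this bound is tight, so $H(\boldsymbol{y} \mid \hat{\boldsymbol{y}}_{\mathrm{local}}^k) = R(\mathcal{G}_{\boldsymbol{\theta}}^k)$, and likewise for the primed network. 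Substituting, $I(\hat{\boldsymbol{y}}_{\mathrm{local}}^k; \boldsymbol{y}) = H(\boldsymbol{y}) - R(\mathcal{G}_{\boldsymbol{\theta}}^k)$ and $I(\hat{\boldsymbol{y}}_{\mathrm{local}}^{k'}; \boldsymbol{y}) = H(\boldsymbol{y}) - R(\mathcal{G}_{\boldsymbol{\theta}'}^k)$, and the risk inequality from the first ingredient finishes the proof.

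I expect the main obstacle to be the loss-to-information bridge, specifically the clean identification of $H(\boldsymbol{y} \mid \hat{\boldsymbol{y}}_{\mathrm{local}}^k)$ with the cross-entropy loss: this requires turning the informal phrase ``sufficiently trained without significant overfitting or underfitting'' into the precise assumptions that (a) optimization actually reaches the population-risk minimizer of the class (an assumption about training dynamics, not just capacity) and (b) the resulting soft predictions are calibrated, so the log-loss/conditional-entropy gap vanishes. If one keeps only the inequality $H(\boldsymbol{y} \mid \hat{\boldsymbol{y}}) \leq R$, the argument breaks, since a miscalibrated smaller network could exhibit small $H(\boldsymbol{y} \mid \hat{\boldsymbol{y}}^k)$ despite a larger risk. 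A secondary technical point is the ``pass-through'' construction for the added LIF layers, which is only approximately an identity; I would handle this by stating the containment up to the same approximation error already implicit in ``sufficiently trained.''
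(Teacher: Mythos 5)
Your first ingredient is essentially the paper's entire proof: the authors also establish hypothesis-class containment by an explicit embedding (copy $\boldsymbol{\theta}$ into the top-left block of $\boldsymbol{\theta}'$, zero the extra width parameters, and make the added layers exact identities by giving them residual connections with zero weights), and then conclude in one sentence that, being sufficiently trained from such an initialization with strictly larger capacity, the expanded network's output can only become more informative about $\boldsymbol{y}$, so $I(\hat{\boldsymbol{y}}_{\mathrm{local}}^{k'};\boldsymbol{y})$ cannot decrease. Where you genuinely diverge is your second ingredient: the paper never introduces a population risk or a loss-to-information bridge; it treats ``more capacity plus sufficient training $\Rightarrow$ non-decreasing mutual information'' as immediate, whereas you make that step concrete by chaining $R(\mathcal{G}_{\boldsymbol{\theta}'}^k)\le R(\mathcal{G}_{\boldsymbol{\theta}}^k)$ with the identity $I(\hat{\boldsymbol{y}};\boldsymbol{y})=H(\boldsymbol{y})-H(\boldsymbol{y}\mid\hat{\boldsymbol{y}})$ and the Gibbs bound $H(\boldsymbol{y}\mid\hat{\boldsymbol{y}})\le \mathbb{E}[-\log\hat{\boldsymbol{y}}[\boldsymbol{y}]]$, tightened by calibration. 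Your route buys a sharper statement of what ``sufficiently trained without overfitting or underfitting'' must mean (attainment of the class-minimal population cross-entropy, plus calibration of the smaller head so the bound is an equality on that side -- you correctly flag that the inequality alone does not suffice), at the price of assumptions the paper leaves implicit; the paper's route buys brevity but is heuristic at exactly the point you formalize. One small simplification for your construction: you need not worry about the added LIF layers being only approximately pass-through -- the paper's device of inserting them as residual blocks with zeroed weights yields an exact identity, so the containment holds with no approximation error.
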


\begin{proof}
We first show that there exists a parameter configuration $\boldsymbol{\theta}^\prime$ such that the expanded auxiliary network $\mathcal{G}_{\boldsymbol{\theta}^\prime}^k$ computes the same function as the original network $\mathcal{G}_{\boldsymbol{\theta}}^k$ for any input $\boldsymbol{s}^k$, i.e.,~$\mathcal{G}_{\boldsymbol{\theta}^\prime}^k(\boldsymbol{s}^k) = \mathcal{G}_{\boldsymbol{\theta}}^k(\boldsymbol{s}^k),  \forall \boldsymbol{s}^k.$

We construct the $\boldsymbol{\theta}'$ in three steps. First, copy the original parameters into the corresponding sub-block of the expanded parameter matrix, i.e., $\boldsymbol{\theta}^\prime_{:D,\,:W} = \boldsymbol{\theta}$. Second, set the additional parameters introduced by the increased width to zero, i.e., $\boldsymbol{\theta}^\prime_{:D,\,W:} = 0$. Third, configure the additional layers to act as identity mappings. This can be achieved by using residual connections with setting their weights to zero, i.e., $\boldsymbol{\theta}^\prime_{D:, :} = 0$.

This construction guarantees that the expanded network exactly replicates the function computed by the original network. Then, consider training the expanded network $\mathcal{G}_{\boldsymbol{\theta}'}^k$ from this initialization. Due to its increased depth and width, the expanded network has the potential to discover a more informative representation with respect to the label. Consequently, the mutual information between its output and the label can only improve or remain the same. The proof is completed.
\end{proof}
Proposition~\ref{prop:mutual_monotone} establishes that the mutual information~$I(\hat{\boldsymbol{y}}_{\mathrm{local}}^k; \boldsymbol{y})$ is monotonically non-decreasing with respect to the depth and width of the auxiliary network. Motivated by this insight, we design each auxiliary network to maximize its expressive capacity, i.e., its depth and width,  within the memory budget $M_{\mathrm{sub}}$. Since increasing network depth yields exponentially greater expressive capacity than increasing width~\cite{eldan2016power,lecun2015deep}, we prioritize depth expansion over width. An illustration of our method is shown in Fig.~\ref{fig:sptial_dec}.

Concretely, we first determine the maximum number of layers from the subsequent layers without violating the memory constraint. We then construct the auxiliary network by selecting layers accordingly to match this maximum allowable depth. Among candidate configurations with the same depth, we prioritize the one with wider layers, as it provides additional expressive capacity under the same depth.

\subsection{Decoupling Temporal Dependencies by Omitting Temporally Dependent Gradients}
\label{sec:tem_dec} 
While auxiliary networks allow each subnetwork to be trained independently, the computation of parameter gradients within each subnetwork still depends on future time steps due to temporal dependencies. As a result, gradients cannot be computed immediately at the current time step, and the memory footprint remains proportional to the total number of time steps. To address this, STDL explicitly discards temporally dependent gradients that contribute negligibly to the overall parameter update. By ignoring these low-impact temporal gradients, STDL enables the online learning of each subnetwork and its auxiliary network, thereby further reducing the memory footprint without sacrificing accuracy. In the following, we detail how STDL computes gradients for a given layer $j$ in the $k$-th subnetwork and its auxiliary network. 

To explicitly illustrate the presence of temporally dependent gradients in BPTT, we expand the gradient expression in Eq.~(\ref{eq:grad_m}) by unfolding the term $\frac{\partial \mathcal{L}^k}{\partial \boldsymbol{m}^{\ell}[t+1]}\frac{\partial \boldsymbol{m}^{\ell}[t+1]}{\partial \boldsymbol{m}^{\ell}[t]}$ along the temporal dimension of the next layer. This yields the following gradient expression for the membrane potential $\boldsymbol{m}^j[t]$ in layer~$j$ of subnetwork $k$:
\begin{align}
\frac{\partial \mathcal{L}^k}{\partial \boldsymbol{m}{^{j}}[t]} &= \frac{\partial \mathcal{L}^k}{\partial \boldsymbol{m}{^{j+1}}[t]}\frac{\partial \boldsymbol{m}^{j+1}[t]}{\partial \boldsymbol{m}^{j}[t]} \nonumber\\
&+ \underbrace{\sum_{t^\prime=t+1}^T
\frac{\partial \mathcal{L}^k}{\partial \boldsymbol{m}{^{j+1}}[t^\prime]} \frac{\partial \boldsymbol{m}{^{j+1}}[t^\prime]}{\partial \boldsymbol{m}{^{j}}[t^\prime]}\frac{\partial \boldsymbol{m}{^{j}}[t^\prime]}{\partial \boldsymbol{m}{^{j}}[t]}}_{\text{Temporally Dependent Gradients}}.
\label{eq:td_grad_m}
\end{align}
Eq.~(\ref{eq:td_grad_m})~reveals that the gradient at the current time step~$t$ is influenced by a summation over all future time steps $t^\prime > t$. These temporally dependent gradients arise primarily from the term $\frac{\partial \boldsymbol{m}^j[t^\prime]}{\partial \boldsymbol{m}^j[t]}$, which reflects the backward temporal influence from future states $t^\prime$ back to the current time $t$.
When omitting the reset process, as is common in prior studies~\cite{fang2021incorporating,fang2021deep}, this term can be approximated as $\frac{\partial \boldsymbol{m}^j[t^\prime]}{\partial \boldsymbol{m}^j[t]} \approx \lambda^{t^\prime - t}$, where $\lambda$ is the decay factor of the membrane potential. As the temporal gap~$t^\prime - t$ increases, this term decays exponentially, diminishing its contribution to the overall gradient.

To enable gradient computation at the current time step $t$, we discard temporally dependent gradients, as their contribution to the overall gradient is negligible. Formally,  we define $\boldsymbol{\psi}^j[t]$ as the total influence of $\boldsymbol{m}{^{j}}[t]$ on the local instantaneous loss~$\mathcal{L}^k[t]$, and it can be calculated recursively as:
\begin{equation}
\boldsymbol{\psi}^j[t]\!=\!\begin{cases} 
\frac{\partial \mathcal{L}^k[t]}{\partial \boldsymbol{m}^J[t]},  &j\!=\!J\\ 
\boldsymbol{\psi}^{j+1}[t]\frac{\partial \boldsymbol{m}^{j+1}[t]}{\partial \boldsymbol{m}^{j}[t]},  &j\!<\!J\\ 
\end{cases}
\end{equation}
where $J$ represents the last layer of the auxiliary network. Using this quantity, the gradient of the local loss with respect to the weights of layer $j$ is computed as:
\begin{equation}
\frac{\partial \mathcal{L}^k}{\partial \boldsymbol{W}^{j}}= \sum_{t=1}^T{\boldsymbol{\psi}^j[t]}\frac{\partial \boldsymbol{m}^{j}[t]}{\partial\boldsymbol{W}^{j}} = \sum_{t=1}^T{\boldsymbol{\psi}^j[t]} {\boldsymbol{s}^{j-1}[t]}^{\top}.
\label{eq:tlg}
\end{equation}
Both $\boldsymbol{\psi}^j[t]$ and $\boldsymbol{s}^{j-1}[t]$ in Eq.~(\ref{eq:tlg}) are readily available at time step $t$, thereby allowing the online learning of each subnetwork and its auxiliary network. This effectively decouples temporal dependencies and ensures that the memory footprint is independent of the total number of time steps.

\begin{table*}[t]
\begin{threeparttable}
\centering
\caption{Comparison of STDL and other learning methods on four image classification datasets. Averaged accuracy and standard deviation are reported from three independent trials. The GPU memory footprint is measured with a batch size of 512.}
\linespread{1.02}
\setlength{\tabcolsep}{1.1mm}
\begin{tabular}{ccccccccccc}
\toprule
\multirow{3}{*}{Method} &
  \multicolumn{2}{c}{Decoupled in} &
  \multicolumn{2}{c}{CIFAR-10} &
  \multicolumn{2}{c}{CIFAR-100} &
  \multicolumn{2}{c}{SVHN} &
  \multicolumn{2}{c}{ImageNet} \\ \cmidrule(l){4-11} 
 &
  \multirow{2}{*}{\begin{tabular}[c]{@{}c@{}}Space\end{tabular}} &
  \multirow{2}{*}{Time} &
  \multicolumn{2}{c}{ResNet-18 (T=4)} &
  \multicolumn{2}{c}{ResNet-19 (T=4)} &
  \multicolumn{2}{c}{VGG16 (T=4)} &
  \multicolumn{2}{c}{SEWResNet-34 (T=4)} \\ \cmidrule(l){4-11} 
                     &   &   & Acc. (\%)       & Mem. (GB) & Acc. (\%)       & Mem. (GB) & Acc. (\%)       & Mem. (GB) & Acc. (\%)       & Mem. (GB) \\ \midrule
BPTT~\cite{wu2018spatio}                 & \XSolidBrush & \XSolidBrush & 95.12+0.15 & 15.45    & 78.83$\pm$0.10 & 39.44    & 96.09$\pm$0.14 & 7.84    & 70.12 & 424.35   \\
SLTT~\cite{meng2023towards}                 & \XSolidBrush & \Checkmark & 95.07$\pm$0.04 & 4.88~$(\downarrow3.2\times)$     & 78.89$\pm$0.16          & 12.38~$(\downarrow3.2\times)$    & 96.61$\pm$0.06          & 2.54~$(\downarrow3.1\times)$        & 70.11          & 144.37~$(\downarrow2.9\times)$        \\
DECOLLE~\cite{kaiser2020synaptic}              & \Checkmark & \Checkmark & 62.05$\pm$0.63          & 2.76~$(\downarrow5.6\times)$         & 33.89$\pm$0.39          & 6.07~$(\downarrow6.5\times)$        & 79.04$\pm$0.31          & 1.81~$(\downarrow4.3\times)$        & 16.26          & 86.33~$(\downarrow4.9\times)$        \\
ELL~\cite{ma2023ell}                  & \Checkmark & \Checkmark & 88.40$\pm$0.13          & 2.76~$(\downarrow5.6\times)$        & 66.48$\pm$0.24          & 6.07~$(\downarrow6.5\times)$        & 94.79$\pm$0.03         & 1.81~$(\downarrow4.3\times)$        & 48.99          & 87.83~$(\downarrow4.8\times)$        \\
\baseline{}STDL & \baseline{}\Checkmark & \baseline{}\Checkmark & \baseline{}94.99$\pm$0.13 & \baseline{}3.51~$(\downarrow4.4\times)$     & \baseline{}78.91$\pm$0.11 & \baseline{}8.31~$(\downarrow4.7\times)$     & \baseline{}96.71$\pm$0.07          & \baseline{}1.82~$(\downarrow4.3\times)$        & \baseline{}69.87        & \baseline{}106.53~$(\downarrow4.0\times)$        \\ \bottomrule
\end{tabular}
\label{tab:static_img}
\end{threeparttable}
\end{table*}
\linespread{1.0}

\begin{table}[!t]
	\centering
	\caption{Comparison of STDL and BPTT-based methods with the accuracy results sourced from their original papers.}
\linespread{1.02}
\setlength{\tabcolsep}{1.5mm}
\begin{tabular}{ccccc}
\toprule
Dataset     & Method               & Network               & T  & Accuracy (\%)            \\ \midrule
CIFAR-10     & LTL-Online~\cite{2022ltl}           & ResNet-20             & 16          & 93.15               \\
            & OTTT\cite{xiao2022online}                 & VGG-11                & 6           & 93.52$\pm$0.06          \\
            & Dspike~\cite{li2021differentiable}               & ResNet-20             & 6           & 94.25$\pm$0.07          \\
            & GLIF~\cite{yao2022glif}                 & ResNet-18             & 4           & 94.67$\pm$0.05          \\
            & \baseline{}\textbf{STDL} & \baseline{}\textbf{ResNet-18}    & \baseline{}\textbf{4}  & \baseline{}\textbf{94.99$\pm$0.13} \\
            \cline{2-5}
            & TET~\cite{deng2021temporal}                  & ResNet-19             & 4           & 94.44$\pm$0.08          \\
            & IM-Loss~\cite{guo2022imloss}              & ResNet-19             & 4           & 95.40$\pm$0.08          \\
            & RecDis~\cite{guo2022recdis}               & ResNet-19             & 4           & 95.53$\pm$0.05          \\
            & \baseline{}\textbf{STDL} & \baseline{}\textbf{ResNet-19}    & \baseline{}\textbf{4}  & \baseline{}\textbf{96.15$\pm$0.07} \\\hline
CIFAR-100   & Dspike~\cite{li2021differentiable}               & ResNet-18             & 4           & 73.35$\pm$0.14          \\
            & RecDis~\cite{guo2022recdis}               & ResNet-18             & 4           & 74.10$\pm$0.13          \\
            & TET~\cite{deng2021temporal}                  & ResNet-18             & 6           & 74.72$\pm$0.28          \\
            & \baseline{}\textbf{STDL} & \baseline{}\textbf{ResNet-18}    & \baseline{}\textbf{4}  & \baseline{}\textbf{75.35$\pm$0.22} \\ \cline{2-5}
            & STBP-tdBN~\cite{zheng2021going}            & ResNet-19             & 4           & 70.86$\pm$0.22          \\
            & RecDis~\cite{guo2022recdis}               & ResNet-19             & 4           & 74.10$\pm$0.13          \\
            & TET~\cite{deng2021temporal}                  & ResNet-19             & 4           & 74.47$\pm$0.15          \\
            & \baseline{}\textbf{STDL} & \baseline{}\textbf{ResNet-19}    & \baseline{}\textbf{4}  & \baseline{}\textbf{78.91$\pm$0.11} \\\hline
ImageNet    & OTTT~\cite{xiao2022online}                 & NFResNet-34           & 6           & 65.15             \\
            & PLIF~\cite{fang2021deep}                 & SEWResNet-34          & 4           & 67.04             \\
            & RecDis~\cite{guo2022recdis}                 & ResNet-34           & 6           & 67.33             \\ 
            & IM-Loss~\cite{guo2022imloss} & ResNet-34 & 6 & 67.43 \\
            & GLIF~\cite{yao2022glif}                 & ResNet-34           & 4           & 67.52             \\
            & TET~\cite{deng2021temporal}                  & SEWResNet-34            & 4           & 68.00            \\
            &TEBN~\cite{duan2022temporal} & SEWResNet-34 & 4 & 68.28 \\
            & \baseline{}\textbf{STDL} & \baseline{}\textbf{SEWResNet-34} & \baseline{}\textbf{4}  & \baseline{}\textbf{69.87}          \\ \bottomrule
\end{tabular}
\label{tab:res_sota_cmp}
 \end{table}
\linespread{1.0}
\section{Experiments}
\label{sec:exp_res}

In this section, we evaluate the effectiveness of STDL on image classification (Section~\ref{sec:static_image_exp}) and event-based vision recognition (Section~\ref{sec:event_exp}). Then, we provide ablation studies and analytical results to provide a better understanding of STDL (Section~\ref{sec:mem_ana}--\ref{sec:memory_acc_trade}). 
Finally, we examine its generalization ability to different spiking neuron models (Section~\ref{sec:diff_neuron_model}).

%
\subsection{Performance Evaluation on Image Classification Datasets}
\label{sec:static_image_exp}
\subsubsection{Experiment Setup}

\textbf{Datasets.} We use four public datasets for the experiments on image classification, including CIFAR-10~\cite{krizhevsky2009learning}, CIFAR-100~\cite{krizhevsky2009learning}, SVHN~\cite{netzer2011reading}, and ImageNet~\cite{deng2009imagenet}.
Following~\cite{deng2021temporal,li2021differentiable,wang2023adaptive}, we apply the autoaugment and cutout data augmentation techniques to CIFAR-10 and CIFAR-100 and employ standard data augmentation techniques on SVHN and ImageNet. More details of the datasets can be found in Appendix~\ref{app:datasets}. 
In line with previous works~\cite{ma2023ell,meng2023towards,fang2021incorporating}, we convert images into temporal sequences by replicating the image across a given number of time steps. These sequences are then directly fed into SNNs, with the first layer serving as an encoding layer that converts pixel values into spikes.

\textbf{Baselines.}
We compare STDL with four learning methods, including BPTT~\cite{wu2018spatio}, SLTT\cite{meng2023towards}, DECOLLE~\cite{kaiser2020synaptic}, and ELL~\cite{ma2023ell}. They have different decoupled dimensions as shown in Table~\ref{tab:static_img}. 
Specifically, DECOLLE and ELL are local learning methods that decouple both spatial and temporal dependencies; SLTT is an online learning method that only decouples temporal dependencies; and BPTT is an end-to-end training method without decoupling.
For fair comparisons, we re-implement these baselines and use consistent training settings. 

\textbf{Training Configurations.}
To validate that the effectiveness of STDL is not tied to a specific network architecture, we adopt different spiking architectures, including ResNet-18~\cite{he2016deep}, ResNet-19~\cite{he2016deep}, VGG16~\cite{simonyan2014very}, and SEWResNet-34~\cite{fang2021deep}.
To apply STDL under a given memory constraint, we first ensure that each auxiliary network contains at least one layer in addition to the classifier layer, and then construct subnetworks within the remaining memory budget, followed by the construction of their corresponding auxiliary networks. Detailed configurations of the resulting subnetworks and auxiliary networks are provided in Appendix~\ref{app:subn_aux_nets}. 
Note that we treat the minimal indivisible unit of a network as a single layer, which corresponds to a residual block and a convolutional layer in ResNets and VGGs, respectively.

For all experiments, we use SGD with momentum and apply a cosine annealing schedule for learning rate adjustment. On ImageNet, we adopt a two-stage training procedure: we first pre-train the SNN using a single time step for 100 epochs, and subsequently fine-tune it with 4 time steps for an additional 10 epochs. Additional training configurations are provided in Appendix~\ref{app:train_setup} and Table~\ref{tab:train_details}.

\subsubsection{STDL Achieves Comparable Accuracy to BPTT with Substantial Memory Savings}
Table~\ref{tab:static_img} summarizes the performance of the proposed STDL method in comparison to the baselines. STDL consistently achieves accuracy comparable to or better than that of BPTT, while significantly saving the GPU memory footprint for training by $4.0\times$-$4.7\times$. 
This superior accuracy and memory advantage hold across various datasets and architectures, highlighting the strong generalizability of STDL. 
Compared to SLTT, STDL achieves approximately 30\% lower GPU memory consumption while maintaining comparable accuracy. This demonstrates that STDL’s spatial decoupling strategy effectively improves memory efficiency without compromising accuracy.

\subsubsection{STDL Significantly Outperforms Local Learning Methods} The accuracy gains of STDL are especially pronounced when compared to existing local learning methods such as DECOLLE and ELL. For example, on CIFAR-100 with ResNet-19, STDL improves accuracy by over 45\% and 12\% compared to DECOLLE and ELL, respectively. These results clearly demonstrate the effectiveness of STDL in addressing the limitations of local learning.

\subsubsection{STDL Effectively Scales to Large-Scale Datasets}
To validate the scalability of STDL with a growing amount of training data, we conduct experiments on the challenging ImageNet dataset using SEWResNet-34~\cite{fang2021deep}. The results are shown in Table~\ref{tab:static_img}. 
It can be observed that local learning methods struggle to perform well on this large-scale dataset.
In contrast, STDL achieves competitive accuracy to BPTT, and outperforms DECOLLE by over 50\% and ELL by over 20\%. 
Moreover, STDL significantly reduces the GPU memory usage during training by $4.0\times$ compared to BPTT while maintaining comparable accuracy. These results clearly demonstrate the scalability of STDL in handling large-scale datasets.

\subsubsection{STDL Achieves Competitive Accuracy Compared to BPTT-Based Methods} 
We compare STDL with existing BPTT-based training methods,  including surrogate gradients~\cite{li2021differentiable}, loss functions~\cite{deng2021temporal,guo2022recdis,guo2022imloss}, neuron models~\cite{fang2021incorporating,yao2022glif}, and normalization methods~\cite{zheng2021going,duan2022temporal}.
As shown in Table~\ref{tab:res_sota_cmp}, STDL consistently achieves competitive accuracy across different architectures and datasets. 
This reaffirms the superior accuracy of STDL.

\subsection{Performance Evaluation on Event-Based Vision Datasets}
\label{sec:event_exp}
\subsubsection{Experiment Setup}
\textbf{Datasets.} We further evaluate the performance of STDL on three event-based vision recognition datasets, including CIFAR10-DVS~\cite{li2017CIFAR10}, GAIT-DAY-DVS~\cite{gait_day_pami}, and HAR-DVS~\cite{2024_hardvs}. 
CIFAR10-DVS is built by scanning images in CIFAR-10 with repeated closed-loop movement using a DVS camera. GAIT-DAY-DVS captures the gait patterns of 20 volunteers under daylight conditions. Both CIFAR10-DVS and GAIT-DAY-DVS have the same spatial resolution of $128\times128$. HAR-DVS is an event-based human activity recognition~(HAR) dataset with a spatial resolution of $346\times260$. It is currently the largest event-based HAR dataset, containing $300$ activity classes and a total of $107,646$ samples. Fig.~\ref{fig:dvs_samples} visualizes randomly selected samples in the three datasets.
We employ the standard preprocessing pipeline provided by the SpikingJelly~\cite{2023spikingjelly} framework to transform events into frames.

\textbf{Training Configurations.} We use VGG-11~\cite{simonyan2014very} on CIFAR10-DVS and ResNet-18~\cite{he2016deep} on GAIT-DAY-DVS and HAR-DVS. We train these networks with the four baselines and STDL under consistent training settings, which are detailed in Appendix~\ref{app:train_setup} and Table~\ref{tab:train_details}. 
For training ResNet-18 with STDL, we adopt the same configuration of subnetworks and auxiliary networks as that in the previous experiments to examine its generalizability.

\begin{table}[t]
\centering
\caption{Comparison of STDL and baselines on three event-based vision datasets. 
The GPU memory is measured with a batch size of $100$.}
\linespread{1.02}
\setlength{\tabcolsep}{0.8mm}
\begin{tabular}{ccccc}
\toprule
Dataset      & Method      & T  & Accuracy (\%)   & GPU Mem. (GB) \\ \hline
CIFAR10-DVS  & Dspike~\cite{li2021differentiable}      & 10 & 75.40$\pm$0.05 & -        \\
             & OTTT~\cite{xiao2022online}        & 10 & 76.27$\pm$0.05 & -        \\
             & TET~\cite{deng2021temporal}         & 10 & 77.33$\pm$0.21 & -        \\ \cline{2-5} 
             & BPTT~\cite{wu2018spatio}        & 10 & 77.87$\pm$0.38 & 99.09    \\
             & SLTT~\cite{meng2023towards}        & 10 & 77.03$\pm$0.19 & 13.08~$(\downarrow7.6\times)$    \\
             & DECOLLE~\cite{kaiser2020synaptic}     & 10 & 61.40$\pm$0.51 & 10.03~$(\downarrow9.9\times)$    \\
             & ELL~\cite{ma2023ell}         & 10 & 72.00$\pm$0.33 & 10.03~$(\downarrow9.9\times)$    \\
             & \baseline{}STDL        & \baseline{}10 & \baseline{}77.87$\pm$0.19 & \baseline{}10.50~$(\downarrow9.4\times)$    \\ \hline
GAIT-DAY-DVS & EV-Gait GCN~\cite{gait_day_pami} & 1  & 89.9       & -        \\
             & TCSA~\cite{yao2023attentionsnn}        & 20 & 92.78$\pm$0.79 & -        \\
             & ASA~\cite{Yao_2023_ICCV}         & 60 & 93.6       & -        \\ \cline{2-5} 
             & BPTT~\cite{wu2018spatio}        & 20 & 94.15$\pm$0.51 & 227.36   \\
             & SLTT~\cite{meng2023towards}        & 20 & 93.88$\pm$0.57 & 15.49~$(\downarrow14.7\times)$    \\
             & DECOLLE~\cite{kaiser2020synaptic}     & 20 & 84.28$\pm$0.88 & 8.89~$(\downarrow25.6\times)$     \\
             & ELL~\cite{ma2023ell}         & 20 & 85.22$\pm$0.87 & 8.89~$(\downarrow25.6\times)$     \\
             & \baseline{}STDL        & \baseline{}20 & \baseline{}94.40$\pm$0.58 & \baseline{}10.91~$(\downarrow20.8\times)$    \\ \midrule
HAR-DVS      & SlowFast~\cite{2024_hardvs}    & -  & 46.54      & -        \\
             & ACTION-Net~\cite{2024_hardvs}  & -  & 46.85      & -        \\
             & ASA~\cite{Yao_2023_ICCV}         & 8  & 47.10      & -        \\ \cline{2-5} 
             & BPTT~\cite{wu2018spatio}        & 4  & 48.64      & 51.20    \\
             & SLTT~\cite{meng2023towards}        & 4  & 48.01      & 15.86~$(\downarrow3.2\times)$    \\
             & DECOLLE~\cite{kaiser2020synaptic}     & 4  & 32.66      & 12.08~$(\downarrow4.2\times)$    \\
             & ELL~\cite{ma2023ell}         & 4  & 40.51      & 12.10~$(\downarrow4.2\times)$    \\
             & \baseline{}STDL        & \baseline{}4  & \baseline{}48.46      & \baseline{}12.87~$(\downarrow4.0\times)$    \\ \bottomrule
\end{tabular}%
\label{tab:event_res}
\end{table}
\linespread{1.0}

\subsubsection{STDL Consistently Achieves Superior Performance}
Table~\ref{tab:event_res} presents the results of the proposed STDL method as well as other baseline methods. STDL consistently achieves comparable or even better accuracy when compared to BPTT, SLTT, and other competitive methods across the three datasets.  
Moreover, STDL offers a significant saving in GPU memory consumption compared to BPTT, with the reduction percentage increasing as the number of time steps increases. For instance, on CIFAR10-DVS with $10$ time steps, STDL achieves a memory reduction of $9.4\times$ compared to BPTT, while on GAIT-DAY-DVS with $20$ time steps, the reduction reaches an impressive $20.8\times$. Additionally, STDL significantly outperforms DECOLLE and ELL in accuracy across these datasets. 
These observations align with those from the image classification experiments in Section~\ref{sec:static_image_exp}, demonstrating the STDL's advantages in high accuracy and memory efficiency. 
The consistent superior performance of our STDL method across various datasets verifies its generalizability and effectiveness in different vision applications.

\subsection{GPU Memory Usage Analysis}
\label{sec:mem_ana}
We compare the GPU memory trace of STDL with that of BPTT using ResNet-18 on CIFAR-10 with 4 time steps.
As shown in Fig.~\ref{fig:memory_trace}, BPTT exhibits a gradual increase in GPU memory usage 
as the network layer and time step grow. This is because BPTT performs gradient computation after the full forward propagation is completed, leading to the accumulation of neuronal states in memory.
In comparison, STDL shows fluctuating GPU memory usage that peaks at a middle layer due to its local updates, which allows cached states to be freed up. Notably, the non-linear increase in GPU memory usage as the network depth grows reflects the decreasing memory footprint of the layers as they approach the output layer in ResNet-18.
This demonstrates the rationality of STDL in combining several successive layers into a single subnetwork without affecting the peak memory footprint. 
The visualization of GPU memory evolution provides a compelling explanation of the memory efficiency advantage of STDL.

\begin{figure}[t]
\centering\includegraphics[width=0.85\linewidth]{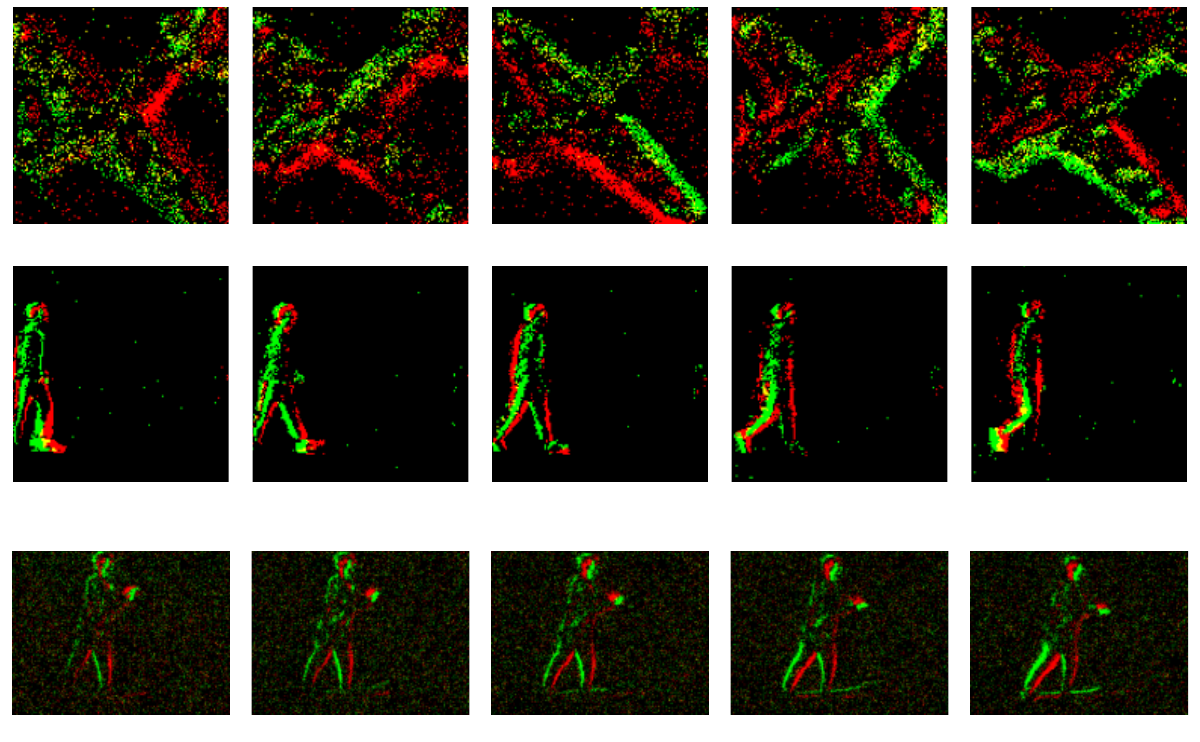}
\caption{\textbf{Visualization of event streams} (accumulated over 10ms) in CIFAR10-DVS~\cite{li2017CIFAR10}, GAIT-DAY-DVS~\cite{gait_day_pami}, and HAR-DVS~\cite{2024_hardvs} from top to bottom. 
	}
\label{fig:dvs_samples}
\end{figure}

\begin{figure}[!t]
\centering\includegraphics[width=0.85\linewidth]{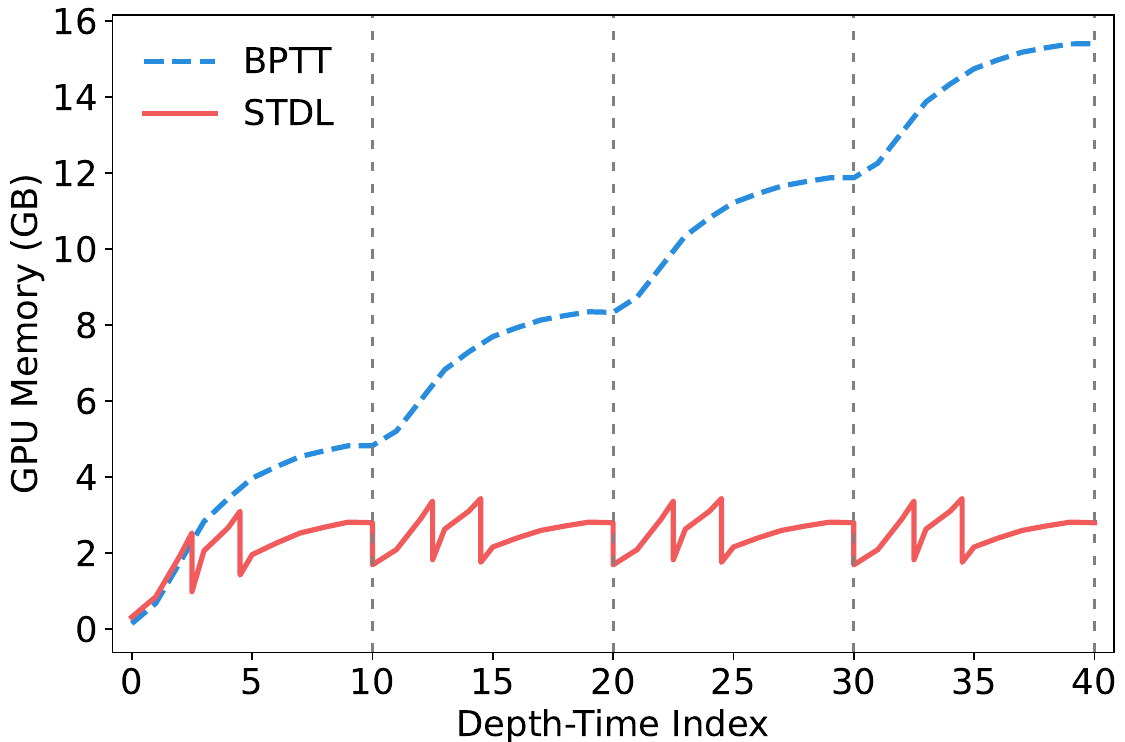}
\caption{\textbf{GPU memory usage pattern with respect to the layer and time index.} ResNet-18 is trained with BPTT and STDL on CIFAR-10 with $4$ steps.
	}
\label{fig:memory_trace}
\end{figure}

\subsection{Representation Similarity Analysis}
\label{sec:rep_ana}
\begin{figure*}[!t]
\centering\includegraphics[width=0.85\linewidth]{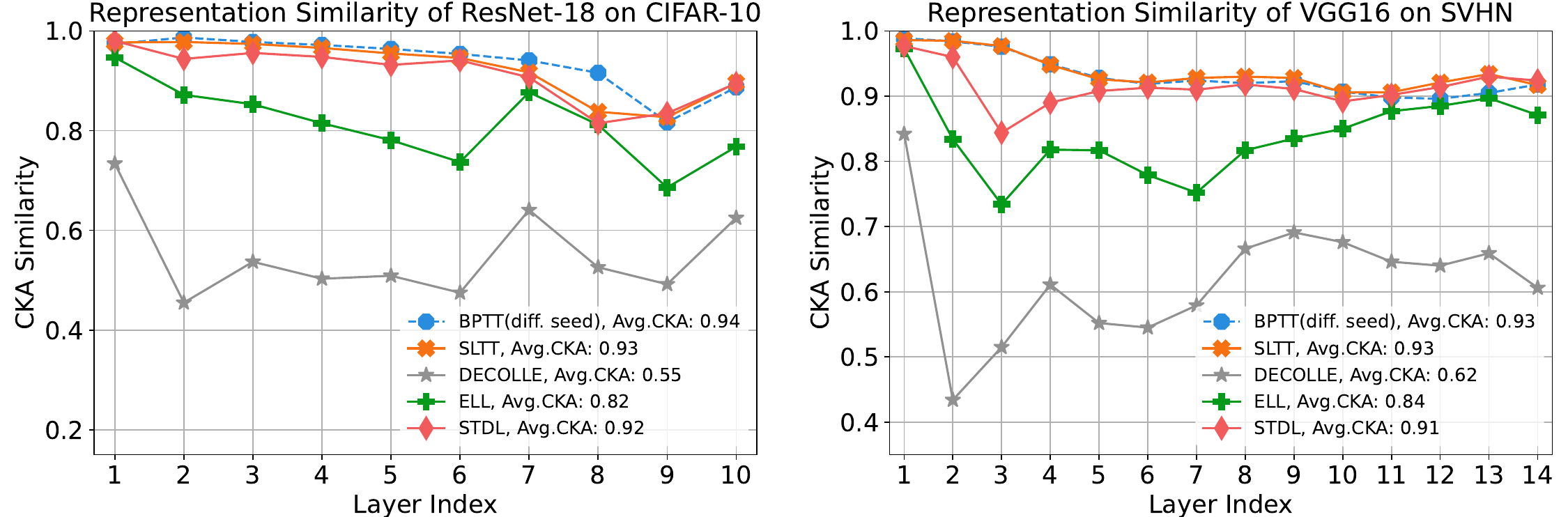}
\caption{\textbf{Comparison of layer-wise representation similarity.} We utilize CKA~\cite{pmlr-v97-kornblith19a} to measure the layer-wise similarity of representations between BPTT and other learning rules. To provide a fair baseline for BPTT, we measure the similarity between two networks trained with different random seeds. The average CKA similarity scores over all layers for different learning rules are provided in the legend.}
\label{fig:cka}
\end{figure*}
\begin{figure*}[!t]
\centering\includegraphics[width=0.85\linewidth]{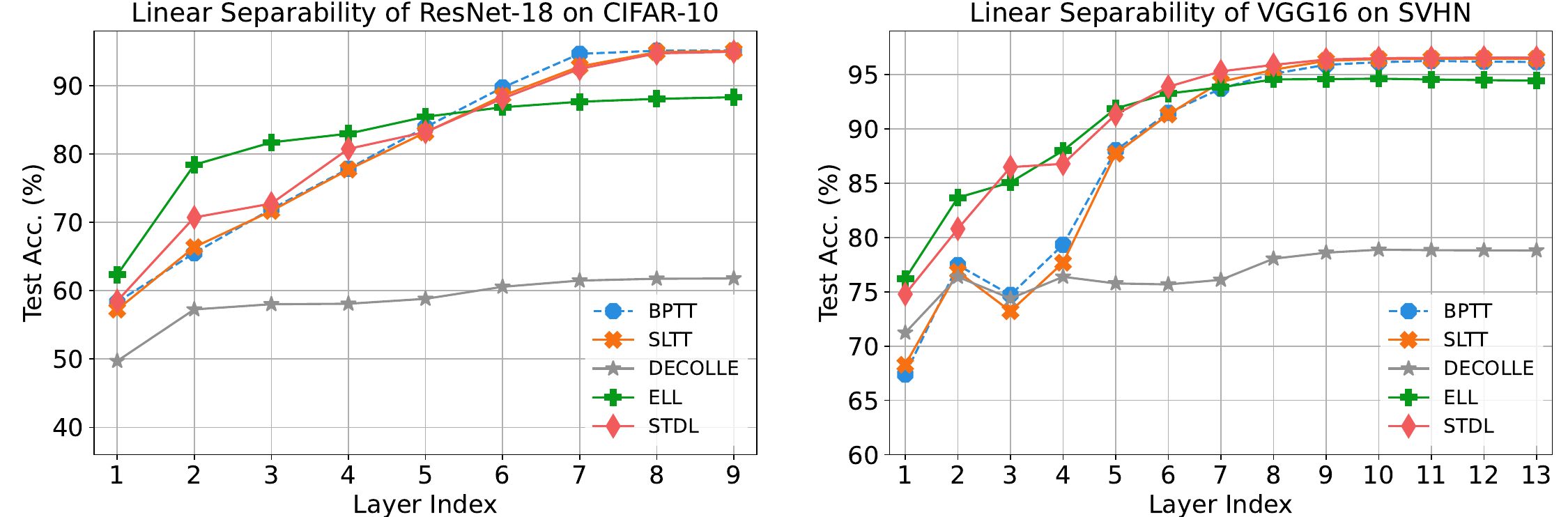}
\caption{\textbf{Comparison of layer-wise linear separability among STDL and baselines.}  
	}
\label{fig:linear}
\end{figure*}

In this part, we analyze the layer-wise representations learned by STDL and compare them with those obtained through BPTT training.
\subsubsection{Representation Similarity}
We use centered kernel alignment (CKA)~\cite{pmlr-v97-kornblith19a} to quantitatively analyze the similarity in the representations of network layers between BPTT and other learning rules. 
The spike-based representation for a layer is obtained by computing its firing rate. We calculate CKA for each layer and an average CKA score across all layers. The results of ResNet-18 and VGG16 are presented in Fig.~\ref{fig:cka}. The STDL-trained layers generate representations that closely resemble those produced by BPTT, while DECOLLE and ELL show significantly lower representation similarity to BPTT. 
This offers a compelling explanation for the efficacy of STDL. 

\subsubsection{Linear Probing}
We further employ the linear probing technique to analyze the layer-wise linear separability of STDL as well as other learning methods.
Specifically, we freeze the parameters of well-trained networks and further train additional linear classifiers that are attached to each hidden layer. The results are provided in Fig.~\ref{fig:linear}.

We observe that STDL closely follows BPTT in terms of layer-wise linear separability, whereas DECOLLE and ELL exhibit drastically different patterns compared to BPTT. Specifically, DECOLLE shows consistently lower linear separability across all layers, indicating that its random and fixed local classifiers are insufficient for guiding hidden layers to produce meaningful representations. 
ELL, on the other hand, encounters the weak coupling issue,  where the early layers exhibit greater linear separability than their BPTT counterparts, but this separability significantly deteriorates in layers closer to the output.
In comparison, STDL initially shows lower linear separability in the early layers, but exhibits substantial improvements in the middle and output layers, closely resembling the pattern observed in BPTT. These findings confirm that STDL effectively addresses the weak coupling issue and achieves layer-wise representations comparable to those trained by BPTT. This alignment underpins the comparable accuracy between STDL and BPTT.

\subsection{Influences of Auxiliary Networks and Subnetworks}
\label{sec:ab_ana}

We conduct ablation studies to examine the individual contributions of the two core components in STDL: the auxiliary network construction strategy and the greedy network partitioning method. All experiments are performed on CIFAR-10 using ResNet-18, and the results are summarized in Fig.~\ref{fig:ablation_study}.

We begin by analyzing the design of auxiliary networks in STDL, which incorporates two key principles: reusing the structure of subsequent layers, and maximizing expressive capacity within the memory budget. To assess the role of structural reuse, we replace the residual blocks in the auxiliary networks with either $3\!\times\!3$ or $1\!\times\!1$ convolutional layers. Both modifications lead to noticeable drops in accuracy, confirming that generic convolutions fail to provide structure priors. This underscores the importance of leveraging the architecture of subsequent layers to guide subnetwork training effectively.

Next, we evaluate the importance of maximizing the expressive capacity of auxiliary networks. We compare against a simplified variant that uses only a two-layer auxiliary network (i.e., a single layer plus the classifier). This setup results in an accuracy drop of approximately 1\%. The performance degrades further when using a linear classifier as the auxiliary network. Another variant that maximizes only the depth also underperforms. These results demonstrate that the expressive capacity of auxiliary networks, in both depth and width, is critical for superior accuracy.

We then evaluate the impact of the greedy network partitioning method. A layer-wise partitioning, where each subnetwork contains only one layer and is trained with a linear classifier, results in a significant accuracy drop to 88.4\%. Even when the same auxiliary networks are preserved, replacing the greedy partitioning with uniform layer-wise partitioning still leads to reduced accuracy. These results underscore the critical role of our network partitioning method in high-accuracy training.

\begin{figure}[!t]
\centering\includegraphics[width=0.8\linewidth]{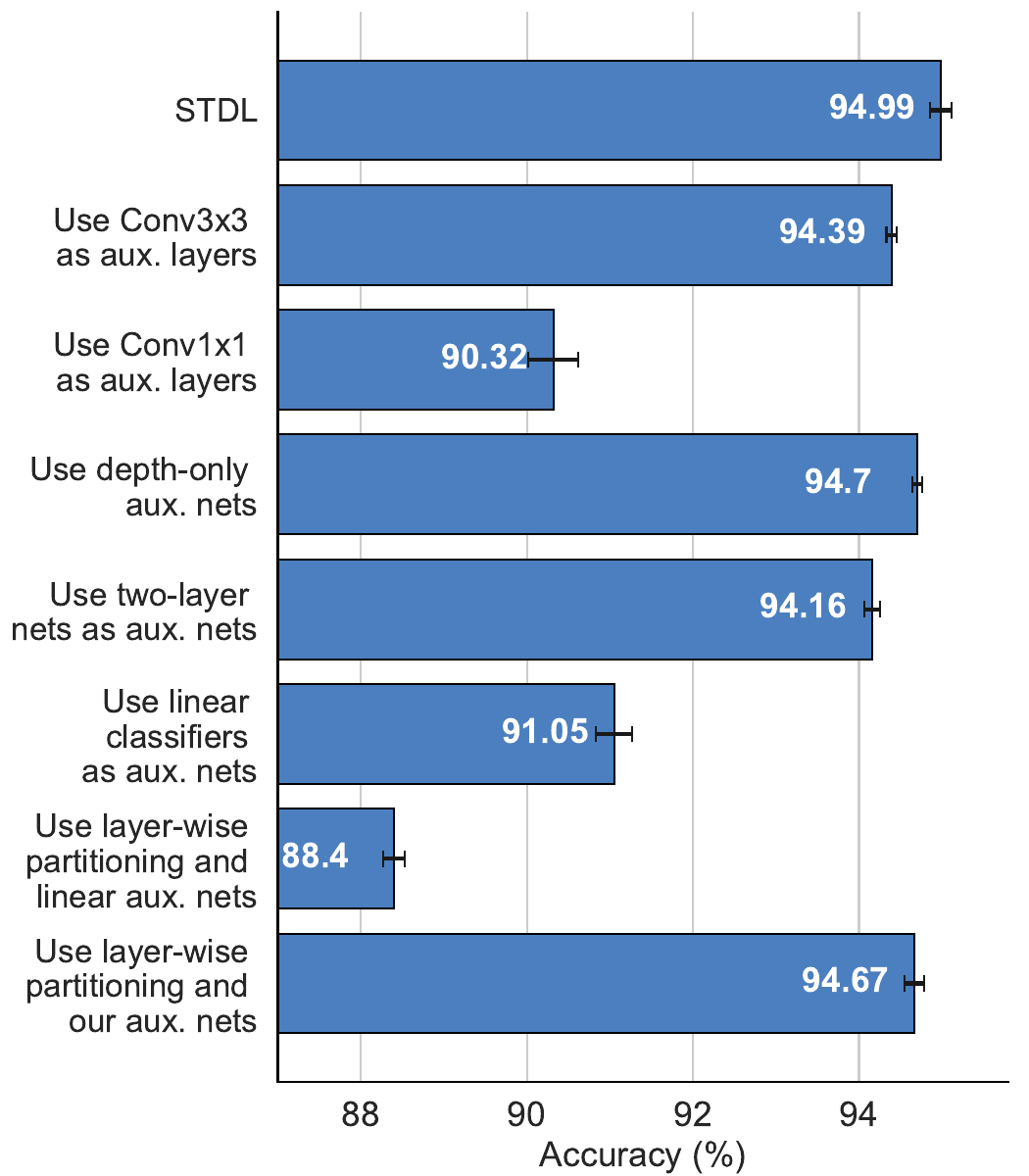}
\caption{\textbf{Influences of auxiliary networks and subnetworks.} Each row denotes a variant of STDL with a modification described on the y-axis. The blue bar chart reports each variant's accuracy for ResNet-18 on CIFAR-10.
}
\label{fig:ablation_study}
\end{figure}

\subsection{Influence of Varying Time Steps}
\label{sec:diff_time_influence}
We investigate the influence of different numbers of time steps on the performance of STDL. We adopt ResNet-18 on CIFAR-10 and vary the number of time steps from $1$ to $6$. We compare STDL against BPTT in terms of accuracy and GPU memory usage. Table~\ref{tab:influence_of_T} shows that STDL consistently achieves comparable accuracy to BPTT across different time steps. Additionally, unlike BPTT, whose GPU memory footprint linearly increases over time, STDL exhibits a consistent memory footprint over time owing to its temporal decoupling. Furthermore, even with just one time step, STDL demands approximately $30$\% less GPU memory than BPTT. This reaffirms that the spatial decoupling in STDL can effectively reduce the memory footprint without sacrificing accuracy.

\begin{table}[!t]
\linespread{1.02}
\centering
\caption{Comparison of accuracy and GPU memory usage between STDL and BPTT with varying time steps on CIFAR-10 using ResNet-18.}
\setlength{\tabcolsep}{1mm}
\begin{tabular}{cccccc}
\toprule
     &           & T=1        & T=2        & T=4        & T=6        \\ \midrule
BPTT & Mem. (GB) & 4.56       & 8.40       & 15.45      & 22.50      \\
     & Acc. (\%) & 94.01$\pm$0.22 & 94.98$\pm$0.17 & 95.12$\pm$0.15 & 95.20$\pm$0.11 \\ \midrule
\baseline{}STDL & \baseline{}Mem. (GB) & \baseline{}3.17       & \baseline{}3.51       & \baseline{}3.51       & \baseline{}3.51       \\
\baseline{}     & \baseline{}Acc. (\%) & \baseline{}93.90$\pm$0.26 & \baseline{}94.94$\pm$0.14 & \baseline{}94.99$\pm$0.13 & \baseline{}95.18$\pm$0.08 \\ \bottomrule
\end{tabular}%
\label{tab:influence_of_T}
\end{table}
\linespread{1.0}

\subsection{Trade-Off between GPU Memory Efficiency and Accuracy}
\label{sec:memory_acc_trade}
\begin{figure}[!htb]
\centering\includegraphics[width=0.85\linewidth]{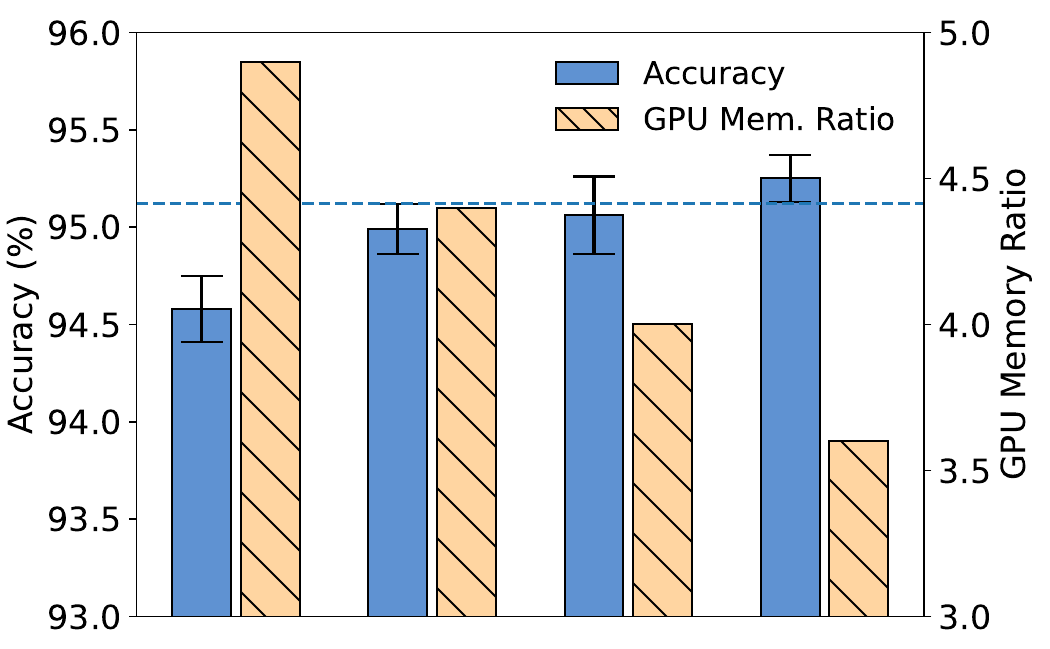}
\caption{\textbf{Trade-offs between GPU memory efficiency and accuracy in STDL.} The dashed line denotes BPTT's accuracy. GPU memory ratio is the memory of BPTT divided by that of STDL. 
}
\label{fig:mem_acc_trade_off}
\end{figure}

\begin{table}[t]
\centering
\caption{Generalization of STDL on different spiking neuron models. The GAIT-DAY-DVS dataset is adopted.}
\linespread{1.02}
\setlength{\tabcolsep}{1.5mm}
\begin{tabular}{cccc}
\toprule
Neuron Model & Method & Accuracy (\%)   & GPU Mem. (GB) \\ \midrule
PLIF~\cite{fang2021incorporating}   & BPTT   & 94.50$\pm$0.36  & 300.92        \\
       & \baseline{}STDL   & \baseline{}94.80$\pm$0.30  & \baseline{}12.47~$(\downarrow24.1\times)$ \\ \midrule
ALIF~\cite{yin2021accurate}   & BPTT   & 94.97$\pm$0.32 & 308.48        \\
       & \baseline{}STDL   & \baseline{}95.02$\pm$0.19 & \baseline{}21.42~$(\downarrow14.4\times)$ \\ \bottomrule
\end{tabular}%
\label{tab:neuron}
\end{table}
\linespread{1.0}
In this part, we investigate the trade-off between GPU memory consumption and model accuracy in STDL. To this end, we vary the level of memory constraint, resulting in different configurations of subnetworks and their corresponding auxiliary networks. Each configuration is then evaluated in terms of accuracy and GPU memory usage, normalized against the memory footprint of BPTT. The results, obtained using ResNet-18 on CIFAR-10, are shown in Fig.~\ref{fig:mem_acc_trade_off}.
 
The results reveal that STDL provides a flexible balance between accuracy and memory efficiency. When the memory constraint is relaxed, STDL yields higher accuracy at the cost of increased memory usage. Conversely, tighter memory constraints lead to improved efficiency with only a minor reduction in accuracy. Notably, STDL achieves substantial memory savings even in high-accuracy regimes. For instance, 
STDL achieves an accuracy of 95.25\%, exceeding that of BPTT, while simultaneously reducing GPU memory footprint by a factor of $3.6\times$. This ability to retain competitive performance under different constrained resources highlights the applicability of STDL in memory-constrained scenarios.

\subsection{Generalization to Different Neuron Models}
\label{sec:diff_neuron_model}
We evaluate the generalization ability of STDL on two widely-used spiking neuron models: parametric LIF~(PLIF)~\cite{fang2021incorporating} and adaptive LIF (ALIF)~\cite{yin2021accurate}. PLIF introduces trainable decay factors, which necessitate the storage of extra neuronal states during forward propagation. 
ALIF additionally includes an internal state that evolves over time, which enables adaptive adjustment of neuronal thresholds but also leads to higher GPU memory consumption.

Table~\ref{tab:neuron} presents the results on the GAIT-DAY-DVS dataset~\cite{gait_day_pami}. Despite the varying memory requirements of these different neuron models, STDL consistently achieves significant memory reduction compared to BPTT, without compromising accuracy. Specifically, STDL yields over $24\times$ and $14\times$ memory savings for the PLIF and ALIF models, respectively. These results underscore the strong generalization ability of STDL across different spiking neuron models.

\section{Conclusion}
\label{sec:conclu}
In this work, we proposed STDL, a novel training method that combines the high memory efficiency of local learning with the high accuracy of BPTT for SNNs. STDL achieves this by partitioning the full network into subnetworks, each trained independently with an auxiliary network in an online manner. To enhance the coupling among the subnetworks, we propose the greedy network partitioning algorithm that minimizes the number of subnetworks under a given memory constraint. In addition, we propose an information-theoretically grounded auxiliary network construction method that encourages alignment with the representations learned by BPTT. Extensive experiments across a range of datasets, architectures, and neuron models demonstrate that STDL consistently achieves accuracy comparable to that of BPTT while significantly reducing GPU memory consumption. This work would offer a promising direction for scalable and memory-efficient training of SNNs and opens up new opportunities for training SNNs on resource-constrained neuromorphic hardware.

\bibliographystyle{IEEEtran}
\bibliography{myRefs}

\clearpage
\onecolumn

\appendices

{\centering\section*{Appendix for\\``Spatio-Temporal Decoupled Learning for Spiking Neural Networks''}}

This appendix presents implementation details, including datasets, training setup, and structures of the subnetworks and the auxiliary networks in STDL.
\section{Datasets}
\label{app:datasets}
Our experiments are based on seven widely used benchmark datasets, including four static image classification datasets (i.e., CIFAR10, CIFAR100, SVHN, and ImageNet) and three event-based vision datasets (i.e., CIFAR10-DVS, GAIT-DAY-DVS, and HAR-DVS)

\begin{itemize}
    \item \textbf{CIFAR10} and \textbf{CIFAR100} datasets are consisted of $50,000$ training and $10,000$ test images, in $10$ classes and $100$ classes, respectively.  We use the standard data augmentation in the training set, where $4$ pixels are padded on each side of the samples followed by a $32\times32$ crop and a random horizontal flip. Following~\cite{deng2021temporal,li2021differentiable,wang2023adaptive}, we also use the autoaugment and cutout techniques for data augmentation.
    \item \textbf{SVHN} dataset contains $73,257$ images for training and $26,032$ images for testing. Training samples are augmented by padding $2$ pixels on each side of images followed by a $32\times32$ crop. 
    \item \textbf{ImageNet} is a $1,000$-class dataset with $1.2$ million images for training and $50,000$ images for validation. The standard data augmentation is used where a $224\times224$ random crop followed by a random horizontal flip is adopted for training samples, and a $224\times224$ central crop is applied for test samples~\cite{he2016deep,fang2021deep}. 
    \item \textbf{CIFAR10-DVS} is derived from the CIFAR-10 dataset by scanning each image with repeated closed-loop movement in front of a DVS camera. It contains $9,000$ samples for training and $1,000$ samples for testing, with a spatial resolution of $128\times128$. Corresponding to the CIFAR-10 classes, CIFAR10-DVS also consists of $10$ classes. We employ the standard preprocessing pipeline in SpikingJelly~\cite{2023spikingjelly} to transform events into frames for further processing. We do not apply any data augmentation techniques.
    \item \textbf{GAIT-DAY-DVS} captures the gait patterns of $20$ volunteers under daylight conditions using a DVS128 camera. The dataset comprises $2,000$ training samples and $2,000$ testing samples, each with a resolution of $128\times128$. It encompasses a total of $20$ distinct gait classes. We use SpikingJelly~\cite{2023spikingjelly} for preprocessing and do not employ any data augmentation techniques.
    \item \textbf{HAR-DVS} is an event-based human activity recognition~(HAR) dataset recorded by a DAVIS346 camera with a spatial resolution of $346\times260$. It is currently the largest event-based HAR dataset, containing $300$ activity classes and a total of $107,646$ samples. Events are converted to frames with SpikingJelly~\cite{2023spikingjelly}. The standard data augmentation is used where a $224\times224$ random crop followed by a random horizontal flip is applied for training samples, and a $224\times224$ central crop is used for test samples.
\end{itemize}

\section{Training Setup}
\label{app:train_setup}
Our training settings are summarized in Table \ref{tab:train_details}. We keep them consistent across all baseline learning methods for fair comparisons. The only exception is with BPTT on ImageNet, where we encounter memory constraints. To mitigate this issue, we adjust the batch size to $256$ and scale the learning rate accordingly.

\begin{table}[!htb]
\caption{Training configurations and hyper-parameters for all evaluated datasets.}
\label{tab:train_details}
\resizebox{\textwidth}{!}{%
\begin{tabular}{cccccccccc}
\hline
Dataset &
  Epochs &
  Optimizer &
  \begin{tabular}[c]{@{}c@{}}Learning \\ Rate\end{tabular} &
  \begin{tabular}[c]{@{}c@{}}Learning Rate \\ Schedule\end{tabular} &
  \begin{tabular}[c]{@{}c@{}}Batch \\ Size\end{tabular} &
  \begin{tabular}[c]{@{}c@{}}Weight \\ Decay\end{tabular} &
  \begin{tabular}[c]{@{}c@{}}Neuronal\\  Decay\end{tabular} &
  Threshold &
  \begin{tabular}[c]{@{}c@{}}\# Time \\ Steps (T)\end{tabular} \\ \hline
CIFAR10            & 400 & SGD   & 0.4  & Cosine annealing & 512 & 0.00005 & 0.1 & 1.0 & 4  \\
CIFAR100           & 400 & SGD   & 0.4  & Cosine annealing & 512 & 0.00005 & 0.1 & 1.0 & 4  \\
SVHN               & 200 & SGD   & 0.1  & Cosine annealing & 512 & 0.00005 & 0.1 & 1.0 & 4  \\
ImageNet (pretrain) & 100 & SGD   & 0.15 & Cosine annealing & 300 & 0.00001 & 0.2 & 1.0 & 1  \\
ImageNet (finetune) & 10  & SGD   & 0.002 & Cosine annealing & 512 & 0    & 0.2 & 1.0 & 4  \\
CIFAR10-DVS        & 300 & AdamW & 0.001 & Cosine annealing & 100 & 0.0005 & 0.1 & 1.0 & 10 \\
GAIT-DAY-DVS       & 100 & AdamW & 0.0005 & Cosine annealing & 100 & 0.0005 & 0.1 & 1.0 & 20 \\
HAR-DVS            & 100 & AdamW  & 0.001 & Cosine annealing & 100 & 0.0005 & 0.1 & 1.0 & 4  \\ \hline
\end{tabular}%
}
\end{table}

\section{Details of Subnetworks and Auxiliary Networks}
\label{app:subn_aux_nets}
Table~\ref{tab:aux_details} presents the details of subnetworks and auxiliary networks constructed using our STDL method for all spiking networks employed in our experiments. For clarity, the following notations are used: R represents a residual block, C denotes a convolutional layer, AP signifies average pooling, and FC indicates a fully connected layer. Additionally, C3 refers to a 3×3 convolutional kernel size. The numerical value preceding C, R, and FC indicates the number of output channels. Furthermore, k, s, and p in brackets denote the kernel size, stride, and padding, respectively. Table~\ref{tab:aux_details} also includes the given memory efficiency $\rho$, which is computed as the ratio of the GPU memory allocated to the online learning of each subnetwork and its auxiliary network compared to that of the online learning of the original network. 

\begin{table}[!htb]
\caption{Structures of subnetworks and auxiliary networks for all adopted spiking networks. $\beta$ represents the maximum GPU memory ratio between each local module and all primary network layers.}
\label{tab:aux_details}
\resizebox{\textwidth}{!}{%
\begin{tabular}{c|c|cc|cc|c}
\hline
\multirow{2}{*}{Network} &
  \multirow{2}{*}{$\rho$} &
  \multicolumn{2}{c|}{1st Local Module} &
  \multicolumn{2}{c|}{2nd Local Module} &
  3rd Local Module \\
 &
   &
  Subnetwork &
  Auxiliary Network &
  SubNetwork &
  Auxiliary Network &
  Subnetwork \\ \hline
ResNet-18 &
  0.7 &
  64C3-64R- &
  \begin{tabular}[c]{@{}c@{}}AP(k2s2)-256R(s2)\\ -512R(s2)-512R-10FC\end{tabular} &
  64R-128R(s2)- &
  256R(s2)-512R(s2)-10FC &
  \begin{tabular}[c]{@{}c@{}}128R-256R(s2)-256R\\ -512R(s2)-512R-10FC\end{tabular} \\ \hline
ResNet-19 &
  0.6 &
  128C3-128R- &
  256R(s2)-512R(s2)-100FC &
  128R-128R- &
  AP(k2s2)-512R(s2)-100FC &
  \begin{tabular}[c]{@{}c@{}}256R(s2)-256R-256R\\ -512R(s2)-512R-100FC\end{tabular} \\ \hline
VGG16 &
  0.7 &
  64C3- &
  AP(k16s16)-512C3-10FC &
  64C3-AP(k2s2)-128C3- &
  AP(k8s8)-512C3-10FC &
  \begin{tabular}[c]{@{}c@{}}128C3-AP(k2s2)-256C3-256C3\\ -256C3-AP(k2s2)-512C3-512C3-512C3\\ -AP(k2s2)-512C3-512C3-512C3-10FC\end{tabular} \\ \hline
SEWResNet-34 &
  0.7 &
  \begin{tabular}[c]{@{}c@{}}64C3-AP(k3s2p1)-64SEWR\\ -64SEWR-64SEWR-\end{tabular} &
  \begin{tabular}[c]{@{}c@{}}AP(k4s4)-512SEWR(s2)\\ -512SEWR(s1)-\\ 512SEWR(s1)-1000FC\end{tabular} &
  \multicolumn{2}{c|}{\begin{tabular}[c]{@{}c@{}}-128SEWR(s2)-128SEWR-128SEWR-128SEWR-256SEWR(s2)\\ -256SEWR-256SEWR-256SEWR-256SEWR-256SEWR\\ -512SEWR(s2)-512SEWR(s2)-512SEWR(s2)-1000FC\end{tabular}} &
   \\ \hline
\end{tabular}%
}
\end{table}

\section{Implementation Details in Section~\ref{subsec:understanding}}
\label{app:imp_incomp}
We provide implementation details for the experiments in Section~\ref{subsec:understanding}. 
We use supervised local learning to train ResNet-18 on CIFAR10, which is evenly decomposed into $\{1, 3, 5, 7, 9\}$ subnetworks, each independently trained with a trainable linear classifier~\cite{ma2023ell} under a consistent training setup. Specifically, we utilize the SGD optimizer and set the weight decay to $0.00005$, the momentum to $0.9$, and the initial learning rate to $0.4$. To schedule the learning rate, we employ the cosine annealing technique. We train the networks for $400$ epochs with a batch size of $512$. In terms of neuronal parameters, we set the decay factor to $0.1$ and the threshold to $1.0$. The total number of time steps is set to $1$ to isolate the effect of spatial decoupling.



 




\vfill

\end{document}